\documentclass{article}

\usepackage{microtype}
\usepackage{graphicx}
\usepackage{subcaption}
\usepackage{booktabs}
\usepackage{hyperref}
\usepackage{url}
\usepackage{color, colortbl}
\usepackage{multicol}
\usepackage{multirow}
\usepackage{amsmath} 
\usepackage{amssymb} 
\usepackage{wrapfig}
\usepackage{color, colortbl,enumitem}
\usepackage{tabularx}
\usepackage{caption}
\usepackage{titletoc}
\usepackage{subfiles}
\usepackage{adjustbox}
\usepackage{amsmath}
\usepackage{amsfonts}
\usepackage{bm}
\usepackage{makecell}
\usepackage[dvipsnames]{xcolor}
\usepackage{subcaption}
\usepackage{float}

\usepackage[accepted]{icml2026}

\usepackage{amsmath}
\usepackage{amssymb}
\usepackage{mathtools}
\usepackage{amsthm}

\usepackage[capitalize,noabbrev]{cleveref}

\theoremstyle{plain}
\newtheorem{theorem}{Theorem}[section]

\newtheorem{lemma}[theorem]{Lemma}

\theoremstyle{definition}

\theoremstyle{remark}

\usepackage[textsize=tiny]{todonotes}

\icmltitlerunning{EqGINO: Equivariant Geometry-Informed Fourier Neural Operators for 3D PDEs}

\begin{document}

\twocolumn[
  \icmltitle{EqGINO: Equivariant Geometry-Informed Fourier Neural Operators\\ for 3D PDEs}

  \icmlsetsymbol{equal}{*}

  \begin{icmlauthorlist}
    \icmlauthor{Sungwon Kim}{kaistgsds}
    \icmlauthor{Juho Song}{kaistmath}
    \icmlauthor{Seungmin Shin}{lge}
    \icmlauthor{Guimok Cho}{lge}
    \icmlauthor{Sangkook Kim}{lge}
    \icmlauthor{Chanyoung Park}{kaistgsds,kaistise}
  \end{icmlauthorlist}

  \icmlaffiliation{kaistgsds}{Graduate School of Data Science, KAIST, Daejeon, Republic of Korea}  
  \icmlaffiliation{kaistmath}{Mathematical Sciences, KAIST, Daejeon, Republic of Korea}  
  \icmlaffiliation{kaistise}{Industrial \& Systems Engineering, KAIST, Daejeon, Republic of Korea}  
  \icmlaffiliation{lge}{LG Electronics, Pyeong-taek, Republic of Korea}  

  \icmlcorrespondingauthor{Chanyoung Park}{cy.park@kaist.ac.kr}
  \icmlkeywords{Machine Learning, ICML}

  \vskip 0.3in
]

\printAffiliationsAndNotice{}

\begin{abstract}

Deep learning surrogates for 3D Partial Differential Equations (PDEs) often fail to generalize across geometric transformations because they depend heavily on specific coordinate systems. While equivariant networks offer a solution, they typically rely on local operations in the spatial domain, making the global receptive field—essential for PDE dynamics—computationally expensive. Conversely, Fourier Neural Operators (FNOs) efficiently capture global interactions, yet establishing 3D equivariance within them remains impractical due to the prohibitive cost of spectral group convolutions. To bridge this gap, we introduce EqGINO, a geometrically robust framework that enforces isotropy in the spectral domain. By design, EqGINO guarantees exact equivariance to the discrete symmetries inherent to the discretized computational domain. Beyond this discrete guarantee, our structural prior enables effective generalization to arbitrary continuous orientations even with a limited number of SE(3)-transformed training samples. Consequently, our method robustly models coordinate-invariant physical laws on complex irregular 3D geometries. Our code is available at \href{https://github.com/sung-won-kim/EqGINO}{this URL.}
\end{abstract}

\section{Introduction}
\label{sec:introduction}
Solving Partial Differential Equations (PDEs) provides a powerful framework for simulating the physical laws governing the real world. These simulations play a critical role in various fields, including structural engineering~\cite{whalen2021simjeb}, material science~\cite{zhang2018deep}, industrial manufacturing~\cite{pfaff2020learning}, and  robotics~\cite{makoviychuk2021isaac}. However, solving PDEs is inherently challenging. While numerical methods \cite{klocke2002examples, felippa2004introduction} based on discretization have traditionally been the standard, they suffer from high computational costs and slow execution times, severely limiting their applicability in real-time and large-scale scenarios. Consequently, deep learning-based surrogate models \cite{fno, gino, wu2024transolver, temnn} have emerged as a promising alternative, capable of rapidly approximating PDE solutions and overcoming the bottlenecks of traditional solvers.

Despite this potential, a fundamental disparity persists between the intrinsic coordinate-invariance of physical laws and the coordinate-dependent nature of current neural models \cite{park2024point, gino, fno, gno, wu2024transolver}. Governing equations, such as the Navier-Stokes, remain valid regardless of the observer's reference frame. In contrast, many state-of-the-art models heavily rely on Cartesian coordinate inputs to facilitate training. While providing such positional information stabilizes optimization for fixed tasks, it inevitably biases the model toward canonical orientations, causing it to overfit to the training coordinate system rather than learning the underlying physical dynamics. Consequently, these models struggle to generalize when initial conditions or geometries undergo rigid transformations—such as rotation or translation—severely compromising their reliability in dynamic, real-world environments.

While equivariant architectures offer a promising solution to mitigate coordinate dependency, particularly those grounded in geometric deep learning, they were not originally designed to address the unique challenges of PDE solving \cite{egnn, emnn}. That is, their reliance on local message passing inherently restricts their receptive fields, rendering them less suitable for modeling the long-range, global interactions fundamental to physical dynamics \cite{li2018deeper, alon2020bottleneck}.  On the other hand, Fourier Neural Operators (FNOs) \cite{fno} operating in the spectral domain offer distinct advantages, such as the inherent ability to capture global correlations. Furthermore, the spectral domain provides a natural representation for differential operators, transforming complex convolutions and derivatives in physical space into simple pointwise multiplications \cite{fno, trefethen2000spectral}. This property facilitates precise modeling of PDE dynamics with reduced discretization errors compared to conventional local difference schemes \cite{trefethen2000spectral, boyd2001chebyshev}. However, establishing 3D equivariance within this domain remains underexplored. While a recent study has addressed this in 2D \cite{gfno}, extending such methods to 3D entails prohibitive computational costs due to the complexity of 3D spectral group convolutions. Given the practical demand for 3D simulations, there is a pressing need to investigate efficient, equivariant modeling in the 3D spectral domain for large-scale applications.

To address these challenges, we propose the \textbf{Eq}uivariant \textbf{G}eometry-\textbf{I}nformed Fourier \textbf{N}eural \textbf{O}perator (\textbf{EqGINO}), a framework designed to bridge the gap between 3D equivariance and computational efficiency in 3D spectral modeling. While leveraging the structural backbone of GINO \cite{gino}—which enables global spectral analysis on arbitrary geometries via the Fast Fourier Transform (FFT)—we fundamentally redesign the core processing units, originally based on the GNO \cite{gno} and FNO \cite{fno}, into two standalone equivariant modules: the \textit{EqGNO} and \textit{EqFNO}. Specifically, the EqGNO serves as a geometric operator for processing unstructured geometric inputs under global rigid transformations, whereas the EqFNO functions as a general-purpose equivariant backbone for 3D grid-based learning tasks requiring symmetry preservation. By integrating these modules, EqGINO guarantees exact equivariance to the discrete SE(3) subgroups inherent to the discretized computational domain, ensuring physical consistency without sacrificing efficiency.

At the core of EqGINO lies a novel 3D spectral layer that achieves equivariance through an \textit{Orbit-based Weight Sharing} strategy. By sharing weights among frequency modes belonging to the same radial orbit, we guarantee isotropy in the spectral domain. 
This strategy not only reduces parameter complexity compared to the vanilla FNO--shifting from $\mathcal{O}(K^3)$ to a linear scaling $\mathcal{O}(K)$ with respect to the spectral resolution $K$ per dimension--but also serves as a strong structural prior.
Beyond the discrete guarantee, this prior enables the model to effectively generalize to arbitrary continuous orientations, accelerating convergence to superior performance even with a limited number of SE(3)-transformed training samples. 
Consequently, EqGINO attains equivariance while retaining the expansive receptive field without the prohibitive costs of 3D group convolutions by design, thereby adapting robustly to unseen geometric transformations.
Our contributions are summarized as follows:

\begin{itemize}[leftmargin=7.5pt]  

\vspace{-0.3em}

\item We propose EqGINO, a unified framework that integrates 3D equivariance into the global receptive field induced by the spectral convolution, enabling robust generalization across irregular 3D geometries.
\vspace{-0.3em}

\item We introduce \textit{Orbit-based Weight Sharing} to enforce spectral isotropy, reducing the parameter complexity of spectral convolution from volumetric to linear scaling while guaranteeing exact discrete equivariance.
\vspace{-0.3em}

\item We demonstrate that EqGINO guarantees exact consistency on discrete orientations by design and seamlessly extends to arbitrary continuous orientations even with limited SE(3)-transformed data, outperforming state-of-the-art baselines.

\end{itemize}

\section{Related Work}
\label{sec:related_work}
\subsection{Neural Operators for PDEs}

Effective PDE surrogate models must capture global interactions, particularly in physics systems such as incompressible fluid dynamics, where local changes necessitate instantaneous responses in distant regions. Consequently, the ability to resolve such long-range dependencies is a fundamental prerequisite for the physical validity of any solver.

To address this, the Fourier Neural Operator (FNO) \cite{fno} leverages the Fast Fourier Transform (FFT) \cite{cooley1965algorithm} to capture global correlations efficiently, though it is restricted to regular grids. To handle irregular geometries, the Graph Neural Operator (GNO) \cite{gno} operates on arbitrary meshes but incurs high costs for dense graphs. GINO \cite{gino} unifies these by employing GNOs to project irregular data onto a latent grid for FNO processing, supporting scalable global modeling on complex domains.

Parallel to this, Transformer-based \cite{vaswani2017attention} models approximate global integral operators \cite{kovachki2023neural}, with architectures like Galerkin Transformer \cite{cao2021choose} and GNOT \cite{hao2023gnot} adopting linear attention to reduce computational costs. Transolver \cite{wu2024transolver} further optimize this paradigm, allowing scalable learning of intrinsic physical states on massive geometries.

However, the heavy dependence on absolute coordinates causes these methods to be fragile to rigid transformations, resulting in physically inconsistent predictions.

\subsection{Equivariance in Physical and Spectral Domains}

While SE(3)-equivariant models such as EGNN \cite{egnn} and EMNN \cite{emnn} successfully address coordinate dependency, their reliance on local message passing renders them inherently inefficient for PDE surrogate modeling. Because PDE dynamics often involve global correlations, capturing these effects via local propagation incurs prohibitive computational costs. Even attempts to mitigate this by incorporating long-range edges (e.g., T-EMNN \cite{temnn}) offer only partial improvements, falling short of achieving a truly global receptive field.

In contrast, the spectral domain inherently offers a global receptive field, making it ideal for capturing long-range interactions. However, 3D equivariant spectral models remain underexplored; G-FNO \cite{gfno} becomes computationally prohibitive in 3D due to group convolutions, while EGNO \cite{xu2024equivariant} focuses on temporal rather than spatial equivariance. Consequently, achieving a scalable, spatially equivariant spectral model for 3D PDEs remains an open challenge.

\begin{figure*}[t]
\begin{center}
\includegraphics[width=0.95\textwidth]{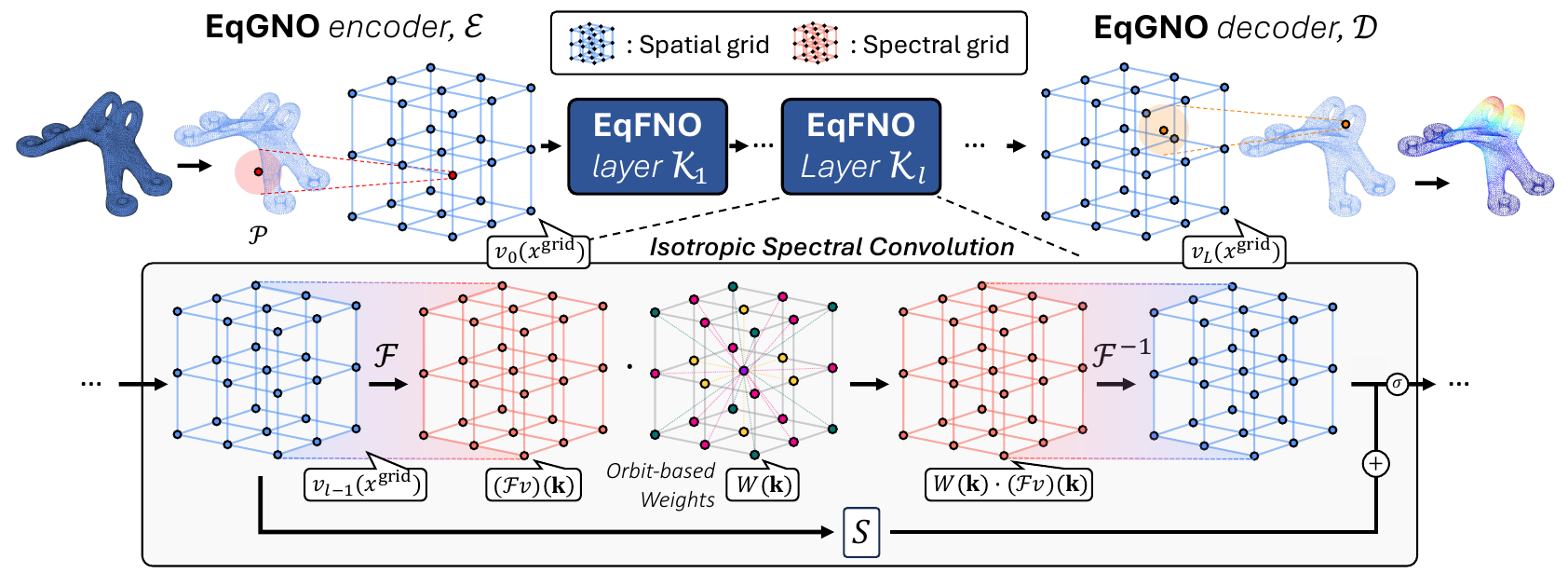}
\end{center}
\vspace{-0.5em}
\caption{\textbf{Overview of EqGINO.} The encoder $\mathcal{E}$ aggregates local geometric information from irregular inputs $\mathcal{P}$ and maps it to regular spatial grid. Within the Fourier layers $\mathcal{K}_l$, orbit-based weights $W(\mathbf{k})$ mix the Fourier-transformed feature vectors $(\mathcal{F}v)(\mathbf{k})$ at each Fourier mode $\mathbf{k}$, ensuring equivariance. The decoder $\mathcal{D}$ projects the updated features $v_L(x^{grid})$ enriched with global context back to the point cloud to predict the physical field $u$ (e.g., deflection, pressure). All modules preserve equivariance by construction.}
\label{fig:equi_maintest}
\vspace{-1em}
\end{figure*}

\section{Preliminaries}
\label{sec:preliminaries}
\subsection{Problem Formulation}
Let $\mathcal{A}$ and $\mathcal{U}$ be separable Banach spaces of functions defined on a bounded domain $D \subset \mathbb{R}^3$. We consider a system of partial differential equations (PDEs) governing a physical system, formally posed as a boundary value problem:
\begin{equation}
    \begin{aligned}
        \mathcal{L}(a, u) &= 0 \quad \text{in } D, \\
        \mathcal{B}(u) &= 0 \quad \text{on } \partial D,
    \end{aligned}
    \label{eq:pde_formulation}
\end{equation}
where $a \in \mathcal{A}$ represents input parameters (e.g., geometry, initial conditions), and $u \in \mathcal{U}$ denotes the solution field (e.g., velocity or pressure). Here, $\mathcal{L}$ is the differential operator encoding the physical laws (e.g., Navier-Stokes equations), and $\mathcal{B}$ is the boundary operator enforcing constraints such as Dirichlet or Neumann conditions on the boundary $\partial D$.
Our objective is to learn a surrogate operator $G_\theta: \mathcal{A} \to \mathcal{U}$, parameterized by $\theta$, that approximates the solution operator $G^\dagger$, such that $G_\theta(a) \approx u$ where $u$ satisfies Eq.~\eqref{eq:pde_formulation}.

\subsection{SE(3)-Equivariance in Physical Systems} \label{pre:equi_in_phy}
Physical laws are inherently independent of the observer's coordinate frame. Formally, let $\mathcal{G} = SE(3)$ be the Special Euclidean group in 3D, consisting of rotations $R \in SO(3)$ and translations $\mathbf{t} \in \mathbb{R}^3$.
The group action of $g=(R,\mathbf{t}) \in \mathcal{G}$ on the spatial domain is defined as the affine transformation $g \cdot x = R x + \mathbf{t}$. In homogeneous coordinates, this corresponds to a matrix multiplication by $T_g \in \mathbb{R}^{4 \times 4}$. Correspondingly, the group action on a input point cloud $\mathcal{P}=\{y_j\}_{j=1}^N$ transforms the set into $\{R y_j + \mathbf{t}\}_{j=1}^N$.

Consider a feature field $v$ belonging to a function space $\mathcal{H}$. We define the group action $T_g^* : \mathcal{H} \to \mathcal{H}$ via the pullback operator associated with a representation $\rho$ of $\mathcal{G}$:
\begin{equation} \label{eq: pullback operator}
    [T_g^* v](x) := \rho(g^{-1})v(g^{-1} \cdot x).
\end{equation}
Here, $g^{-1} = (R^T, -R^T \mathbf{t})$. The specific form of $\rho$ depends on the geometric nature of the space. We denote the representations as $\rho_{in}$ and $\rho_{out}$ for the input space $\mathcal{A}$ and the output space $\mathcal{U}$, respectively. For a scalar field, the values remain invariant under transformation, which implies a trivial representation ($\rho(g) = I$). Throughout this paper, we focus on scalar fields to simplify the notation. Thus, the reduced operation $[T_g^* v](x) = v(R^{-1}(x - \mathbf{t}))$ accounts for both the rotational orientation and the translational shift of the underlying domain.

Formally, a neural operator $G_\theta$ is equivariant if $G_\theta(T_g^* a) = T_g^* G_\theta(a)$ for all $g \in \mathcal{G}$ and inputs $a \in \mathcal{A}$. 
Satisfaction of this property is crucial for the accurate prediction of physical systems across arbitrary coordinate systems.

\section{Methodology}
\label{sec:methodology}

\subsection{Architecture Overview}
\label{subsec:revisiting_gino}
The overall architecture integrates two proposed equivariant primitives—\textit{EqGNO} and \textit{EqFNO}—into a unified pipeline consisting of three stages: (i) the EqGNO encoder $\mathcal{E}$ that lifts irregular geometric information into latent representations on a regular grid; (ii) consecutive EqFNO layers $\mathcal{K}_l$ that efficiently compute global interactions retaining equivariance through isotropic spectral convolution; (iii) and the EqGNO decoder $\mathcal{D}$ that reconstructs the target physical field on the original point cloud.

\vspace{-1em}
\begin{equation}
    G_{\theta} = \mathcal{D} \circ \mathcal{K}_L \circ \cdots \circ \mathcal{K}_1 \circ \mathcal{E}.    
\end{equation}
\vspace{-1em}

While our architecture adopts the structural backbone of GINO \cite{gino}, we fundamentally redesign each module to enforce SE(3)-equivariance. Crucially, this modularity allows each component to function as an independent, general-purpose module: EqFNO serves as an equivariant backbone for 3D grid-based learning tasks requiring symmetry preservation (e.g., volumetric physics simulation), while EqGNO acts as a geometric operator for processing unstructured geometric inputs under global rigid transformations.

We detail our SE(3)-equivariant GNO encoder and decoder (EqGNO) in Sec. \ref{met:EqGNO}, followed by the SE(3)-equivariant FNO (EqFNO) in Sec. \ref{met:EqFNO}.

\subsection{SE(3)-Equivariant Graph Neural Operator block} \label{met:EqGNO}
The Graph Neural Operator (GNO) \cite{gno} is effective for mapping data between irregular meshes and structured latent grids.
However, the original formulation parameterizes the kernel using absolute Cartesian coordinates. Since these coordinate values vary under rotation, the standard GNO fails to produce consistent features for the same geometric object in different orientations.
To overcome this limitation, we adapt the GNO to be SE(3)-equivariant by employing a rotation-invariant kernel strategy.

\paragraph{Encoder.}
Our EqGNO encoder $\mathcal{E}$ transforms the irregular point cloud $\mathcal{P}$ into a latent representation $v_0$ at each uniform regular grid point $x^{grid}$ by approximating a local kernel integral with a Riemann sum: $v_0(x^{grid}) \approx \sum_{j=1}^{M} \kappa(x^{grid}, y_j) \mu_j$, 
where $\{y_j\}_{j=1}^M \subset B_r(x^{grid})$ denotes the input points sampled randomly within a local ball of radius $r$ centered at $x^{grid}$, and $\mu_j$ is the quadrature weight compensating for irregular density.

To enforce SE(3)-equivariance, we replace coordinate-dependent inputs with scalar invariants. Specifically, the kernel $\kappa$ takes the relative distance $\|x^{grid} - y_j\|$ and $\|x^{grid} - \bar{y}\|$ as inputs where $\bar{y}$ denotes the centroid of $\mathcal{P}$:
\begin{equation}
    \kappa(x^{grid}, y_j) = \phi_{\theta} \left( \|x^{grid} - y_j\|, \|x^{grid} - \bar{y}\| \right).
\end{equation}
This design ensures that the kernel captures both the local relative spatial structure and the global radial context without being sensitive to the object's orientation. Detailed proofs are provided in Appx. \ref{sec:eq_for_encoder}.

\paragraph{Decoder.}
Our EqGNO decoder $\mathcal{D}$ projects the final grid representation $v_L$ back to the point cloud $Y_{out} = \{y^{out}\}$ in the physical space. Serving as the inverse of the encoding process, it reconstructs the predicted physical quantity $u$ at each target coordinate $y^{out}$ via kernel integration over neighboring grid points:
\begin{equation}
    u(y^{out}) \approx \sum_{x_j^{grid} \in \mathcal{N}(y^{out})} \kappa_{dec}(y^{out}, x_j^{grid}) v_L(x_j^{grid})\mu_j.
\end{equation}
Here, $\kappa_{dec}$ is a learnable kernel parameterized by the relative distance $\|y^{out} - x_j^{grid}\|$ and $\|y^{out} - \overline{x^{grid}}\|$, analogous to the encoder. By relying solely on this rotation-invariant quantity, the decoder preserves the SE(3)-equivariance.

\subsection{SE(3)-Equivarariant Fourier Neural Operator block} \label{met:EqFNO}
Even though EqGNO ensures SE(3)-equivariance for the geometric transformations, the intermediate processing—specifically the Fourier Neural Operator (FNO) blocks—must also preserve this property to guarantee end-to-end equivariance. 
In this section, we extend the FNO \cite{fno} to be SE(3)-equivariant (EqFNO). We first formulate the Fourier layer and then theoretically derive the conditions required to embed equivariance. 

\paragraph{Fourier Layer.}
We define the spatial convolution operator $\mathcal{L}:\mathcal{H}_{in} \to \mathcal{H}_{out}$ acting on an input function $v_{in} \in \mathcal{H}_{in}=L^2(D;\mathbb{R}^{d_{in}})$ with a kernel function $\kappa:D \times D \to \mathbb{R}^{d_{out} \times d_{in}}$ as follows:
\begin{equation} 
    (\mathcal{L} v_{in})(x) = \int_{D} \kappa(x,y) v_{in}(y) dy. \label{eq:conv}
\end{equation} 
We impose periodic boundary conditions on the domain $D$ and that $\kappa$ is translation-invariant as in FNO.
This allows the Convolution Theorem to diagonalize the convolution operator in the spectral domain.
Let $\mathcal{F}$ and $\mathcal{F}^{-1}$ denote the Fourier transform and its inverse. The action of the operator $\mathcal{L}$ can be expressed as:
\begin{equation}
    (\mathcal{L}v)(x^{grid}) = \mathcal{F}^{-1} \left[ W(\mathbf{k}) \cdot (\mathcal{F}v)(\mathbf{k}) \right] (x^{grid}),
\end{equation}
where $\mathbf{k} \in \mathbb{Z}^3$ represents the Fourier mode (spectral grid point).
Here, $W(\mathbf{k})$ denotes the learnable weight, which corresponds to the Fourier coefficient of the kernel $\kappa$ for each Fourier mode, i.e., $W(\mathbf{k})= (\mathcal{F}\kappa)(\mathbf{k}) \in \mathbb{C}^{d_{out} \times d_{in}}$.

By incorporating the channel-mixing spectral convolution, we define the complete Fourier Layer $\mathcal{K}:\mathcal{H}_{in} \to \mathcal{H}_{out}$ as:
\begin{equation}
    \mathcal{K}_l(v_{l-1})(x^{grid})=\sigma(S_l v_{l-1}(x^{grid})+(\mathcal{L}v_{l-1})(x^{grid}))
\end{equation}
For layers $l=1, \dots, L$, the feature updates iteratively as $v_l = \mathcal{K}_l (v_{l-1})$. The final output $v_L$ then serves as the input to the EqGNO decoder $\mathcal{D}$.
Here, $S \in \mathbb{R}^{d_{out} \times d_{in}}$ denotes a learnable weight matrix. The skip connection $Sv(x)$ adjusts for non-periodic boundary effects in the spatial domain~\cite{fno}. A non-linear activation $\sigma$ mixes frequency components and facilitates the learning of non-linear solution operators.

\paragraph{Analysis of Fourier Neural Operator.}
A fundamental motivation for applying Fourier representations is the inherent compatibility between the Fourier transform and spatial rotations. To analyze the equivariance properties, we first recall the behavior of scalar fields under rotation in the spectral domain:

\begin{lemma}[Spectral Rotation Property]
\label{lemma:spectral_rotation}
Let $f \in L^2(\mathbb{R}^3)$ be a scalar field and $\mathcal{T}_R$ be the rotation operator defined by $[\mathcal{T}_R f](x) = f(R^{-1}x)$. The Fourier transform satisfies:
\begin{equation} \label{eq:spectral_rot_proof}
    \widehat{\mathcal{T}_R f}(\mathbf{k}) = \hat{f}(R^{-1}\mathbf{k}).
\end{equation}
\end{lemma}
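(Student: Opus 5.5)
The plan is to compute the transform directly from its definition and change variables. I will fix the convention $\hat f(\mathbf{k}) = \int_{\mathbb{R}^3} f(x)\, e^{-2\pi i \langle \mathbf{k}, x\rangle}\,dx$; the argument is identical for the $e^{-i\langle \mathbf{k},x\rangle}$ convention. I first prove the identity for $f \in L^1(\mathbb{R}^3)\cap L^2(\mathbb{R}^3)$, where the integral converges absolutely, and then pass to all of $L^2$ by density.

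For such $f$, I write
\begin{equation*}
\widehat{\mathcal{T}_R f}(\mathbf{k}) = \int_{\mathbb{R}^3} f(R^{-1}x)\, e^{-2\pi i \langle \mathbf{k}, x\rangle}\,dx ,
\end{equation*}
and substitute $y = R^{-1}x$, so that $x = Ry$. Since $R \in SO(3)$, we have $|\det R| = 1$, and hence $dx = dy$. The phase becomes $\langle \mathbf{k}, Ry\rangle = \langle R^{T}\mathbf{k}, y\rangle = \langle R^{-1}\mathbf{k}, y\rangle$, using orthogonality $R^T = R^{-1}$. The integral is then exactly $\int f(y) e^{-2\pi i\langle R^{-1}\mathbf{k}, y\rangle}dy = \hat f(R^{-1}\mathbf{k})$. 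The only properties of $R$ used are orthogonality and unit determinant.

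To extend to general $f\in L^2$, I use that $\mathcal{T}_R$ is an isometry of $L^2$ (by the same change of variables). The map $g \mapsto g\circ R^{-1}$ on the frequency side is likewise an isometry. The Fourier transform on $L^2$ is the unique continuous (Plancherel) extension from $L^1\cap L^2$. So both sides of the identity are bounded linear maps $L^2\to L^2$ that agree on the dense subspace $L^1\cap L^2$, and they therefore agree everywhere, as $L^2$ elements (almost everywhere in $\mathbf{k}$).

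This density/Plancherel step is the only mildly delicate point; the computation itself is routine. I would also remark that on the periodic discrete grid used by EqFNO, the same computation holds verbatim with sums in place of integrals, provided $R$ maps the lattice $\mathbb{Z}^3$ to itself (e.g.\ the cube symmetry group). This is the setting in which the lemma is later applied.
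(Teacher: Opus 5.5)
Your proof is correct and follows the paper's argument: the substitution $y=R^{-1}x$, with $dx=dy$ because $\det R=1$ and $\langle \mathbf{k},Ry\rangle=\langle R^{-1}\mathbf{k},y\rangle$ by orthogonality. Your extension from $L^1\cap L^2$ to $L^2$ via density and Plancherel is a rigor step the paper omits, since it writes the integral directly for $f\in L^2$, so it only strengthens the argument.
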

\noindent This lemma formally establishes that a rotation in the spatial domain induces a coherent rotation of the Fourier modes in the spectral domain (Proof provided in Appx. \ref{sec:lemma4.1}).

The Fourier transform exhibits intrinsic rotational equivariance, as shown in Lemma~\ref{lemma:spectral_rotation}. However, the FNO fails to preserve this property because it comprises learnable spectral weights. Specifically, the spectral weights $W(\mathbf{k})$ in the convolution layer operate as unconstrained parameters and do not adhere to the rotational transformation law of Fourier modes. This incompatibility with rotation-induced shifts in the spectral domain undermines the equivariance inherited from the Fourier transform. Such unconstrained parameterization causes a violation of SO(3)-equivariance, as the network fails to differentiate between orientation changes and fundamentally different objects.

\paragraph{EqFNO: Isotropic Spectral Convolution.}
We now identify the necessary and sufficient conditions required to achieve rotation equivariance in the spectral domain.
To guarantee that our EqFNO architecture respects the geometric symmetries of the physical system, the spectral convolution layer must enforce equivariance. Formally, this requirement mandates that the convolution operator $\mathcal{L}$ commute with the pullback operator $T_g^*$ in the spectral domain: $\mathcal{L} \circ T_g^* = T_g^* \circ \mathcal{L}$. That is, for any rotation $R \in SO(3)$ and any input feature field $h$, the following condition must hold:
\begin{equation} \label{eq:equivariance_condition}
    \mathcal{F} \left( \mathcal{L}(T_g^* h) \right) (\mathbf{k}) = \mathcal{F} \left( T_g^* (\mathcal{L} h) \right) (\mathbf{k}).
\end{equation}
By expanding both sides via the properties of the induced spectral action, we derive the necessary condition on the learnable weight matrix $W(\mathbf{k})$. 
Notably, translation equivariance is inherently guaranteed by the properties of convolution; the spatial translation $\mathbf{t}$ corresponds to a phase shift $e^{-2\pi i \langle \mathbf{k}, \mathbf{t} \rangle}$ in the spectral domain.
Since we exclusively use coordinate-independent features—meaning both input and output fields consist of rotation-invariant scalar channels—the weight constraint simplifies to:
\begin{equation} \label{eq:isotropic_condition}
    W(R\mathbf{k}) = W(\mathbf{k}).
\end{equation}
This implies that the learnable weights must remain invariant under any rotation of the Fourier mode $\mathbf{k}$. In other words, the weight $W(\mathbf{k})$ depends solely on the frequency magnitude $\|\mathbf{k}\|_2$, rather than its direction. We provide a proof of the equivariance including translation in Appx. \ref{sec:iso_spec_conv}.

\begin{figure}[t]
    \centering
    \begin{subfigure}[b]{0.48\linewidth}
        \centering
        \includegraphics[width=\linewidth]{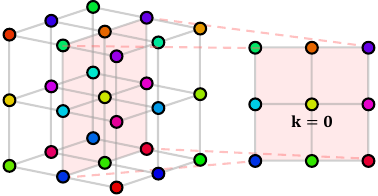}
        \caption{Anisotropic weights}
    \end{subfigure}
    \hfill
    \begin{subfigure}[b]{0.48\linewidth}
        \centering
        \includegraphics[width=\linewidth]{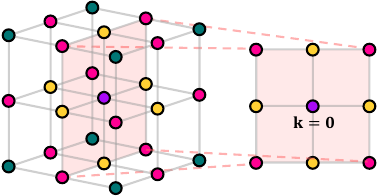}
        \caption{Isotropic weights}
        \label{fig:iso_weights_b}
    \end{subfigure}
    \hfill
\vspace{-0.3em}
\caption{Visualization of weight parameters within FNO layers. Identical colors denote shared parameters. (a) Anisotropic weights in the vanilla FNO. (b) Isotropic weights in the proposed EqFNO.}
\label{fig:iso_weight}
\vspace{-1.2em}
\end{figure}

To enforce the constraint in Eq.~\eqref{eq:isotropic_condition} explicitly, we introduce an \textit{Orbit-based Weight Sharing} mechanism. We define an orbit $\mathcal{O}_r$ as the set of Fourier modes with the same magnitude: $\mathcal{O}_r = \{ \mathbf{k} \in \mathbb{Z}^3 \mid \|\mathbf{k}\|_2 \approx r \}$.
Instead of learning independent weights for each $\mathbf{k}$, we assign a single shared weight parameter $w_r$ to all modes within the same orbit $\mathcal{O}_r$ as shown in Fig.~\ref{fig:iso_weights_b}. 
\paragraph{Necessity of Full-FFT for Equivariance.} We explicitly employ the Full-FFT rather than the Real-FFT (RFFT). RFFT reduces computational redundancy by storing only a spectral half-space, exploiting the Hermitian symmetry of real-valued signals ($\hat{f}(-\mathbf{k}) = \overline{\hat{f}(\mathbf{k})}$).
However, this compression scheme is structurally incompatible with rotation equivariance. A rotation can map a Fourier mode from the stored half-space to the omitted region, necessitating reconstruction via complex conjugation. Since complex conjugation is an anti-linear operation, it violates the complex-linear assumption of the spectral convolution layer. We avoid this conflict entirely by adopting Full-FFT. Collectively, these structural constraints ensure that the model strictly satisfies rotation equivariance. Details are given in Appx. \ref{sec:rfft}.

\textbf{\textit{Remark 1. How is the parameter complexity of Full-FFT handled?}
} The implementation of the Full-FFT doubles the size of the learnable spectral tensor compared to the RFFT. 
However, the weight sharing constraint significantly mitigates this overhead by decreasing the total count of independent parameters. 
Quantitatively, for the spectral resolution $K^3$, our approach reduces parameter complexity from $\mathcal{O}(K^3)$ to $\mathcal{O}(K)$, as the number of orbits scales linearly.

\textbf{\textit{Remark 2. How is the computational overhead of Full-FFT addressed?}} \label{remark:channel_group}
While weight sharing reduces parameter complexity, the increased computational load (FLOPs) of Full-FFT presents a challenge. To mitigate this, we enforce a block-diagonal structure on the spectral weight tensor~\citep{xie2017aggregated}. This constraint restricts the linear transformation to independent subspaces, decomposing the original dense weight matrix into $G$ smaller, disjoint sub-matrices.
For each $c \in \{1, \dots, G\}$ and $\mathbf{k}$, the sub-weight matrix $[W(\mathbf{k})]_c$ resides in $\mathbb{C}^{\frac{d_{out}}{G} \times \frac{d_{in}}{G}}$. Thus, the operation replaces a single large matrix multiplication with $G$ independent smaller matrix multiplications. The approximation for the channel-mixing convolution computation cost becomes $G \cdot (\frac{d_{out}}{G} \cdot \frac{d_{in}}{G} \cdot N) = \frac{d_{out} d_{in} N}{G}$, where $N$ denotes the total number of Fourier modes. 
While this constraint limits channel interaction, it decreases the FLOPs of the channel mixing by a factor of $G$, effectively offsetting the extra computation required for the full set of Fourier modes relative to RFFT.

With $G = 2$, the computational cost aligns with the  RFFT baseline. For larger grouping factors ($G > 2$), we translate the efficiency gain into higher per-axis resolution of the spectral grid ($K$).
This adjustment compensates for weight sharing constraints; a higher $K$ yields a larger number of distinct radial orbits, effectively increasing independent learnable parameters. 
Consequently, this approach restores the model's expressive capacity while maintaining the number of learnable parameters comparable to the RFFT baseline.

\subsection{SE(3)-Equivariant Local Basis}
\label{met:Local Basis}
In the previous section, the proposed orbit-based weight sharing mechanism guarantees equivariance for predicting scalar tasks such as von Mises stress prediction by enforcing the isotropic condition $W(R\mathbf{k}) = W(\mathbf{k})$. However, the extension of this framework to vector field prediction, such as displacement $\mathbf{u} \in \mathbb{R}^3$, presents a fundamental challenge, as described in Appx. \ref{sec:iso_spec_conv}.

To resolve this challenge, we propose a geometric strategy that reformulates the vector prediction task as a scalar regression problem within our efficient isotropic backbone. Rather than directly predicting global vector components, the model infers the projection coefficients $\{\alpha, \beta, \gamma\}$ of the target vector onto an SE(3)-equivariant local basis $\{\mathbf{e}_1, \mathbf{e}_2, \mathbf{e}_3\}$. Consequently, the final vector field $\mathbf{v}$ is reconstructed via the linear combination: $\mathbf{v} = \alpha\mathbf{e}_1 + \beta\mathbf{e}_2 + \gamma\mathbf{e}_3$.
This approach extends the model's capability to the prediction of SE(3)-equivariant vector fields. Further implementation details and the specific methodologies for constructing local bases for each dataset are provided in Appx. \ref{sec:local_basis}.

\vspace{-1.5em}
\paragraph{Discrete Equivariance and Generalization.} While our theoretical formulation guarantees SE(3)-equivariance, the reliance on regular grids for the FFT inherently restricts exact equivariance to the Octahedral rotation group ($O$). However, our architecture, grounded in SE(3)-equivariance, serves as a strong structural prior, enabling EqGINO to effectively generalize to arbitrary SE(3) transformations and significantly outperform baselines even with limited SE(3)-transformed training data, as empirically demonstrated in the subsequent experiments.

\section{Experiment}
\label{sec:experiment}

\begin{table*}[ht!]
\centering
\caption{Performance comparison for \textbf{(a)} In-Distribution and \textbf{(b)} Zero-Shot Generalization ($O$ group). We evaluate \textit{relative $L_2$ error}. Best results are \textbf{bolded}, and second-best are \underline{underlined}.}
\label{tbl:main_ab}
\resizebox{\textwidth}{!}{%
\begin{tabular}{l||ccccc|c|cc||ccccc|c|cc}
\cmidrule[1.5pt]{1-17}
 & \multicolumn{8}{c||}{\textbf{(a) Train: Canonical $\rightarrow$ Test: Canonical}} 
 & \multicolumn{8}{c}{\textbf{(b) Train: Canonical $\rightarrow$ Test: Rotated (Discrete)}} \\
\cmidrule[1pt]{2-17}
 & \multicolumn{5}{c|}{\textbf{AhmedBody}} & \textbf{ShapeNet} & \multicolumn{2}{c||}{\textbf{DeepJEB}} 
 & \multicolumn{5}{c|}{\textbf{AhmedBody}} & \textbf{ShapeNet} & \multicolumn{2}{c}{\textbf{DeepJEB}} \\ \cmidrule[1pt]{2-17}
\textbf{Model} & \textbf{WS} & \textbf{Press} & \textbf{Kin} & \textbf{Visc} & \textbf{Omega} & \textbf{Press} & \textbf{Defl} & \textbf{Stress} 
 & \textbf{WS} & \textbf{Press} & \textbf{Kin} & \textbf{Visc} & \textbf{Omega} & \textbf{Press} & \textbf{Defl} & \textbf{Stress} \\ 
\cmidrule[1.5pt]{1-17}
\rowcolor{gray!15} \multicolumn{17}{l}{\textbf{Non-Equivariant Models}} \\
\midrule
PointNet & 0.4270 & 0.5119 & 0.3758 & 0.1136 & 0.0414 & \underline{0.1433} & 0.1738 & 0.3864 & 0.9547 & 1.1866 & 0.6203 & 0.1978 & 0.0451 & 1.4418 & 4.5059 & 1.3424 \\
PointNet++ & 0.4495 & 0.5457 & 0.3699 & 0.0992 & 0.0377 & 0.1570 & 0.2028 & 0.4296 & 1.1461 & 0.9105 & 0.7071 & 0.1607 & 0.0487 & 1.7209 & 2.7154 & 1.3535 \\
MeshGraphNet & 0.3473 & 0.3842 & 0.2416 & 0.0927 & 0.0224 & 0.6291 & 0.4147 & 0.5088 & 1.0072 & 1.1495 & 0.6560 & 0.1638 & 0.0449 & 0.6291 & 8.5468 & 2.0466 \\
GINO & 0.1987 & \underline{0.1666} & \textbf{0.1454} & \underline{0.0584} & \underline{0.0146} & 0.1610 & \textbf{0.1113} & 0.4026 & 0.6238 & 0.5631 & 0.4265 & 0.1296 & 0.0385 & 1.4950 & 2.3194 & 1.1266 \\
PointDeepONet & 0.7231 & 0.8906 & 0.9731 & 0.1722 & 0.0319 & 0.2015 & 0.1870 & 0.5221 & 1.6741 & 1.4503 & 1.6456 & 0.2111 & 0.0661 & 1.4337 & 5.2359 & 1.3851 \\
Transolver & \textbf{0.1286} & 0.2763 & 0.1922 & 0.0833 & 0.0159 & \textbf{0.1194} & \underline{0.1615} & \textbf{0.3735} & 0.7946 & 0.5189 & 0.5066 & 0.1350 & 0.0469 & 1.6632 & 1.5885 & 1.0422 \\
\cmidrule[1.5pt]{1-17}
\rowcolor{gray!15} \multicolumn{17}{l}{\textbf{Equivariant Models}} \\
\midrule
EGNN & 0.9950 & 0.9184 & 1.0231 & 0.1797 & 0.0339 & 0.7070 & 0.8423 & 0.5636 & 0.9950 & 0.9184 & 1.0231 & 0.1797 & 0.0339 & 0.7070 & 0.8423 & 0.5636 \\
EMNN & 0.9772 & 0.9309 & 0.9256 & 0.1751 & 0.0319 & 0.2509 & 0.8028 & 0.6155 & 0.9772 & 0.9309 & 0.9256 & 0.1751 & 0.0319 & 0.2509 & 0.8028 & 0.6155 \\
Transolver* & 0.7458 & 0.6620 & 0.4690 & 0.1350 & 0.0361 & 0.9295 & 0.4192 & 0.5236 & 0.7458 & 0.6620 & 0.4690 & 0.1627 & 0.0361 & 0.9295 & 0.4192 & 0.5236 \\
T-EMNN & 0.7295 & 0.7844 & 0.8316 & 0.1674 & 0.0281 & 0.1831 & 0.3200 & 0.4204 & 0.7295 & 0.7844 & 0.8316 & 0.1674 & 0.0281 & \underline{0.1831} & 0.3200 & 0.4204 \\ \midrule\midrule
\textbf{EqGINO*}& 0.2078 & 0.1737 & 0.1513 & 0.0613 & 0.0157 & 0.1878 & 0.1644 & 0.3877 & \underline{0.2078} & \underline{0.1737} & \underline{0.1513} & \underline{0.0613} & \underline{0.0157} & 0.1878 & \textbf{0.1644} & \underline{0.3877} \\
\textbf{EqGINO}& \underline{0.1958} & \textbf{0.1637} & \underline{0.1489} & \textbf{0.0581} & \textbf{0.0137} & 0.1773 & 0.1710 & \underline{0.3853} & \textbf{0.1958} & \textbf{0.1637} & \textbf{0.1489} & \textbf{0.0581} & \textbf{0.0137} & \textbf{0.1773} & \underline{0.1710} & \textbf{0.3853} \\
\cmidrule[1.5pt]{1-17}
\end{tabular}%
}
\vspace{-0.8em}
\end{table*}

\begin{figure*}[t]
\begin{center}
\includegraphics[width=1\textwidth]{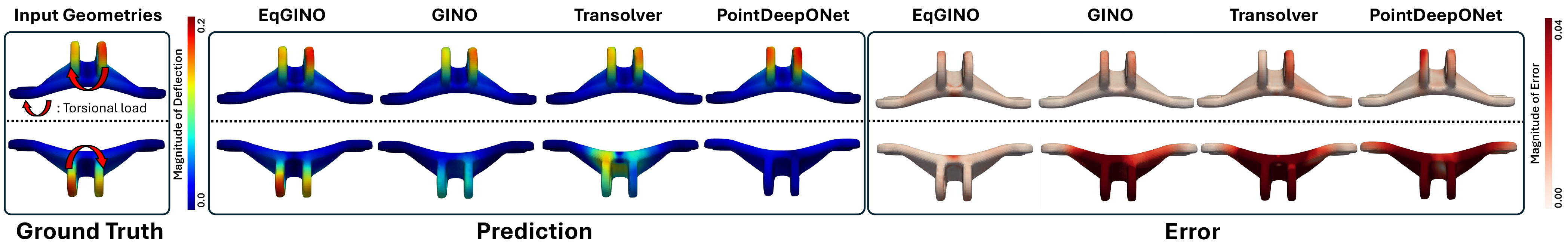}
\end{center}
\caption{Visualization of \textit{deflection} predictions on the \textit{DeepJEB} dataset. Note that all models were trained exclusively on the canonical trainset. The rows display input geometries in different orientations: \textbf{(Top)} Canonical input; \textbf{(Bottom)} Input rotated by $180^\circ$. Contours indicate the magnitude of deflection, while the error maps depict the absolute difference in magnitude ($|\text{Ground Truth} - \text{Prediction}|$). Qualitative analysis and extended visualizations for additional baselines are provided in Appx.~\ref{apx:qual_vector} and \ref{apx:add_exp_deepjeb}.}
\label{fig:equi_maintest}
\vspace{-1em}
\end{figure*}

\begin{figure}[t!]
    \centering
    \begin{subfigure}[b]{0.32\linewidth}
        \centering
        \includegraphics[width=\linewidth]{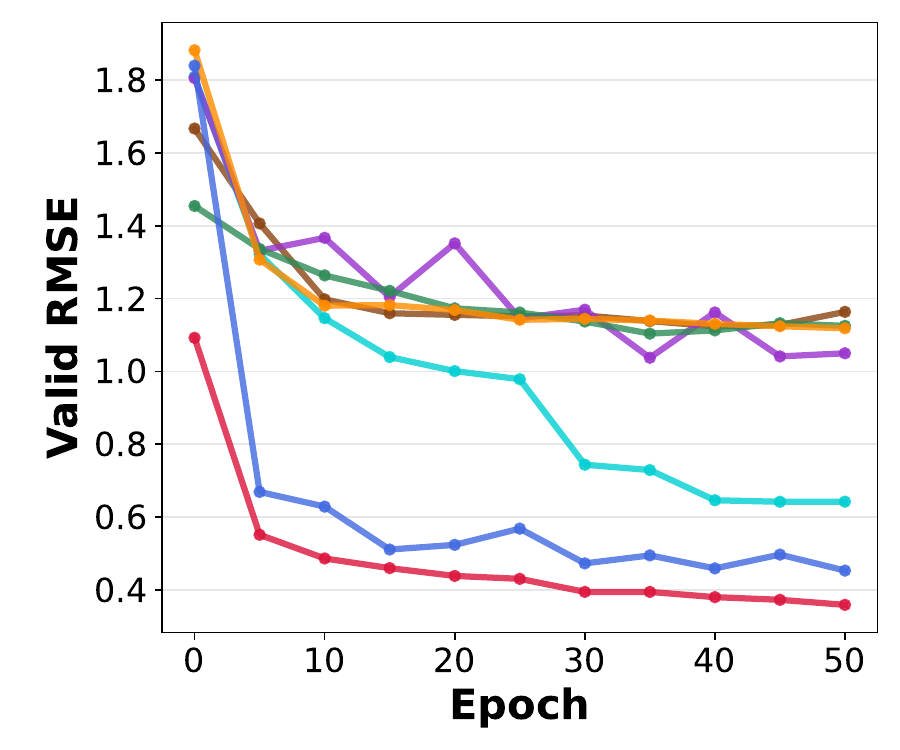}
    \end{subfigure}
    \hfill
    \begin{subfigure}[b]{0.32\linewidth}
        \centering
        \includegraphics[width=\linewidth]{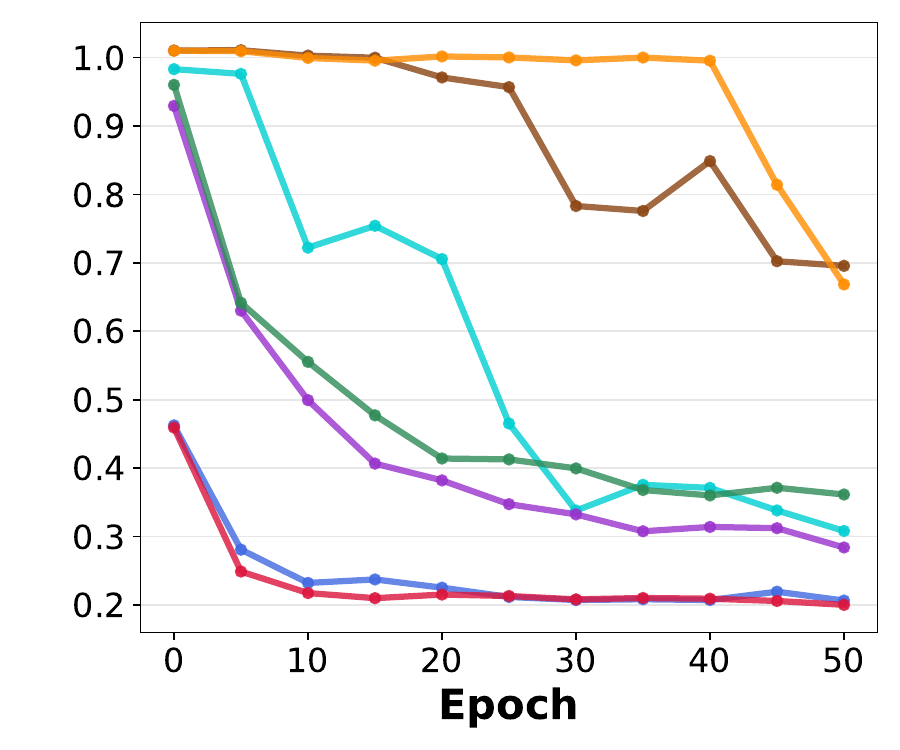}
    \end{subfigure}
    \hfill
    \begin{subfigure}[b]{0.32\linewidth}
        \centering
        \includegraphics[width=\linewidth]{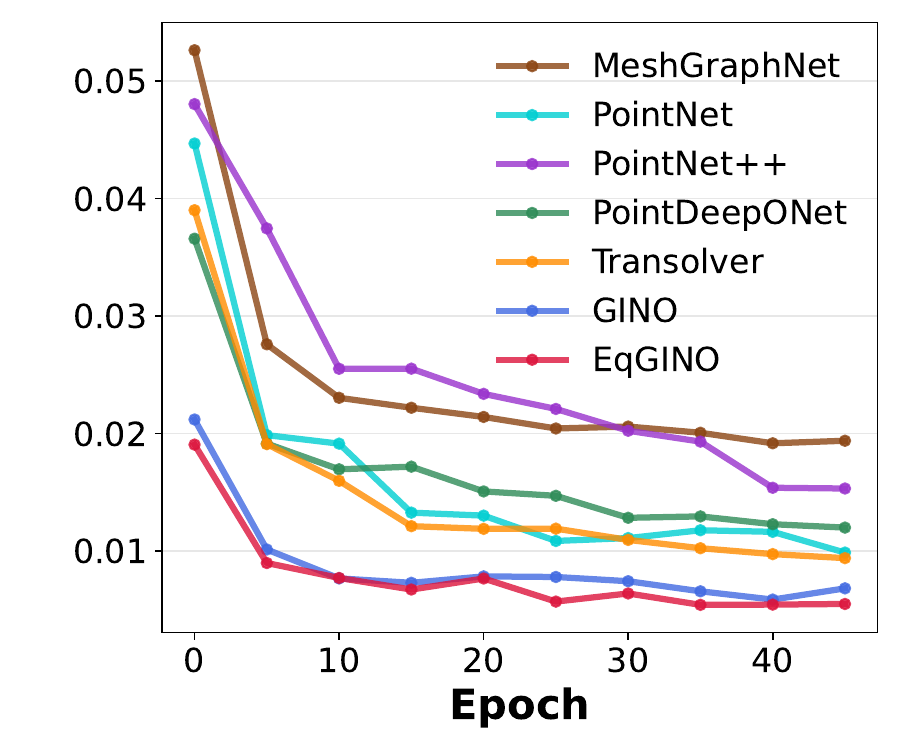}
    \end{subfigure}
    \hfill
\caption{Validation RMSE of models trained with discrete rotation augmentation, evaluated on the test set under identical discrete rotations. (\textbf{Left}) \textit{AhmedBody (Wall Shear Stress)}, (\textbf{Middle}) \textit{ShapeNetCar (Press)}, (\textbf{Right}) \textit{DeepJEB (Deflection)}.}
\label{fig:aug_test}
\vspace{-2em}
\end{figure}

\paragraph{Datasets.}
We evaluate our method on three distinct 3D physics benchmarks covering both fluid dynamics and structural mechanics. Details are provided in Appx.~\ref{apx:data_detail}.
\vspace{-0.8em}
\begin{itemize}[leftmargin=7.5pt]
     \item \textbf{Fluid Dynamics (AhmedBody \& ShapeNetCar):} The \textit{AhmedBody} \cite{ahmed1984some,gino} represents turbulent flow regimes around parameterized vehicle shapes, predicting the vector-valued \textit{wall shear stress} and four scalar quantities: \textit{pressure}, \textit{turbulent kinetic energy}, \textit{turbulent viscosity}, and \textit{specific dissipation rate (omega)}. 
     The \textit{ShapeNetCar} \cite{umetani2018learning, chang2015shapenet} involves external flow simulations over diverse vehicle geometries,  estimating the surface \textit{pressure} field.

    \item \textbf{Structural Mechanics (DeepJEB):} The \textit{DeepJEB} \cite{hong2025deepjeb} dataset follows the principles of linear elasticity in solid mechanics. The prediction task aim to estimate the structural response of mechanical components under external loads and torques, specifically the vector-valued \textit{deflection} and the scalar \textit{von Mises stress}.
\end{itemize}

\vspace{-1em}
\paragraph{Evaluation Protocol.}
We assess SE(3)-equivariance using three protocols (Table~\ref{tbl:main_ab} and ~\ref{tbl:main_c}). For all rotated settings, we transform both input geometry and vector-valued quantities (e.g., velocity, forces) to maintain physical consistency.
\textbf{(a) In-Distribution (Canonical $\to$ Canonical):} Training and evaluation are performed on canonically aligned data to establish a performance baseline within a fixed reference frame.
\textbf{(b) Zero-Shot Generalization (Canonical $\to$ Discrete Rotations):} The canonically trained model is evaluated on samples rotated by $90^\circ$ multiples (the $O$ group). This tests the model's intrinsic geometric robustness without prior exposure to rotated data.
\textbf{(c) Continuous Generalization (Rotated $\to$ Continuous Rotations):} Training and evaluation are conducted using continuous SE(3) augmentations to assess robustness against arbitrary orientations. 
    
\paragraph{Baselines.}
We benchmark our method against a diverse set of baselines, including point cloud-based methods (PointNet \cite{qi2017pointnet}, PointNet++ \cite{qi2017pointnet++}, PointDeepONet \cite{park2024point}, Transolver \cite{wu2024transolver}, GINO \cite{gino}) and mesh-based GNNs (MeshGraphNet \cite{pfaff2020learning}, EGNN \cite{egnn}, EMNN \cite{emnn}, T-EMNN \cite{temnn}).

To analyze the trade-off between generalization and expressivity, we evaluate both non-equivariant and equivariant models. Non-equivariant baselines fully leverage canonical information (e.g., coordinates) to maximize in-distribution expressivity, whereas equivariant baselines are restricted to geometric invariants (e.g., curvature, dot products) to ensure robust generalization across transformations.

For the state-of-the-art Transolver \cite{wu2024transolver}, we provide a comprehensive evaluation by comparing two versions: (1) the vanilla Transolver with standard coordinate features (\textit{Transolver}), and (2) a custom SE(3)-equivariant variant (\textit{Transolver*}). This modified version relies solely on scalar invariants—such as the distance from the point of force application and average face area—and incorporates our proposed local basis method for vector-valued predictions. By contrasting two variants, we can quantify the impact of explicit coordinate features on model performance. Further implementation details are provided in Appx.~\ref{app:implementation}

\subsection{In-Distribution Performance on Canonical Poses}

In Table~\ref{tbl:main_ab}a, we present a comprehensive performance comparison against all baseline models on the canonical test dataset. We report results for EqGINO under two configurations representing distinct resource constraints: \textit{EqGINO*} ($G=2, K=32$), which aligns with the computational cost of the baseline \textit{GINO}, and \textit{EqGINO} ($G=4, K=40$), which maintains an equivalent parameter budget to \textit{EqGINO*} while prioritizing higher spectral resolution. Here, $G$ corresponds to the number of disjoint channel groups within the block-diagonal structure of the weight tensor, as detailed in Sec.~\ref{remark:channel_group}.

The results demonstrate that EqGINO achieves performance competitive with non-equivariant state-of-the-art models (\textit{Transolver}, and \textit{GINO}), despite the inherent restrictions on input node features required to enforce equivariance. Furthermore, EqGINO significantly outperforms mesh-based equivariant baselines (\textit{EGNN}, \textit{EMNN}, and \textit{T-EMNN}), a distinct advantage attributed to the global receptive field provided by spectral analysis. In contrast, the SE(3)-equivariant variant of Transolver (i.e., \textit{Transolver*}), which relies solely on scalar features without explicit coordinate information, exhibits a marked degradation in performance. This substantial drop exposes the difficulty of achieving high performance without coordinates, a challenge our spectral approach effectively overcomes.

\begin{table}[t!]
\centering
\caption{Performance comparison for Continuous Generalization ($SE(3)$ group). We evaluate \textit{relative $L_2$ error}. Best results are \textbf{bolded}, and second-best are \underline{underlined}.}
\label{tbl:main_c}
\resizebox{0.48\textwidth}{!}{%
\begin{tabular}{l||ccccc|c|cc}
\cmidrule[1.5pt]{1-9}
 & \multicolumn{8}{c}{\textbf{Train: Rotated (Continuous) $\rightarrow$ Test: Rotated (Continuous)}} \\
\cmidrule[1pt]{2-9}
 & \multicolumn{5}{c|}{\textbf{AhmedBody}} & \textbf{ShapeNet} & \multicolumn{2}{c}{\textbf{DeepJEB}} \\ \cmidrule[1pt]{2-9}
\textbf{Model} & \textbf{WS} & \textbf{Press} & \textbf{Kin} & \textbf{Visc} & \textbf{Omega} & \textbf{Press} & \textbf{Defl} & \textbf{Stress} \\ 
\cmidrule[1.5pt]{1-9}
\rowcolor{gray!15} \multicolumn{9}{l}{\textbf{Non-Equivariant Models}} \\
\midrule
PointNet & 0.3561 & 0.3405 & 0.2351 & 0.1018 & 0.0385 & 0.2366 & 0.2144 & 0.3895 \\
PointNet++ & 0.3675 & 0.3884 & 0.2937 & 0.0923 & 0.0312 & 0.2417 & 0.2706 & 0.4397 \\
MeshGraphNet & 0.6028 & 0.4127 & 0.2754 & 0.1265 & 0.0217 & 0.5842 & 0.5097 & 0.5143 \\
GINO & 0.2779 & 0.2110 & 0.1747 & 0.0660 & 0.0178 & 0.1813 & \underline{0.1584} & 0.4197 \\
PointDeepONet & 0.6336 & 0.6888 & 0.6720 & 0.2124 & 0.0409 & 0.3375 & 0.3744 & 0.7615 \\
Transolver & 0.3349 & 0.4222 & 0.3219 & 0.1239 & 0.0359 & 0.3347 & 0.2169 & \textbf{0.3658} \\
\cmidrule[1.5pt]{1-9}
\rowcolor{gray!15} \multicolumn{9}{l}{\textbf{Equivariant Models}} \\
\midrule
EGNN & 0.8619 & 0.8181 & 0.6989 & 0.2025 & 0.0297 & 0.6543 & 0.8375 & 0.5923 \\
EMNN & 0.8844 & 0.7669 & 0.6865 & 0.2116 & 0.0302 & 0.2522 & 0.8182 & 0.6553 \\
Transolver* & 0.9430 & 0.6424 & 0.4549 & 0.1610 & 0.0347 & 0.9273 & 0.4014 & 0.5123 \\
T-EMNN & 0.6994 & 0.6203 & 0.5967 & 0.1581 & 0.0266 & 0.1795 & 0.3045 & 0.4244 \\ \midrule\midrule
\textbf{EqGINO*}& \underline{0.2247} & \textbf{0.1775} & \textbf{0.1491} & \underline{0.0651} & \textbf{0.0160} & \underline{0.1658} & \textbf{0.1553} & 0.3693 \\
\textbf{EqGINO}& \textbf{0.2087} & \underline{0.1848} & \underline{0.1494} & \textbf{0.0612} & \underline{0.0163} & \textbf{0.1560} & 0.1621 & \underline{0.3668} \\
\cmidrule[1.5pt]{1-9}
\end{tabular}%
}
\vspace{-0.8em}
\end{table}

\begin{figure}[t!]
    \centering
    \begin{subfigure}[b]{0.33\linewidth}
        \centering
        \includegraphics[width=\linewidth]{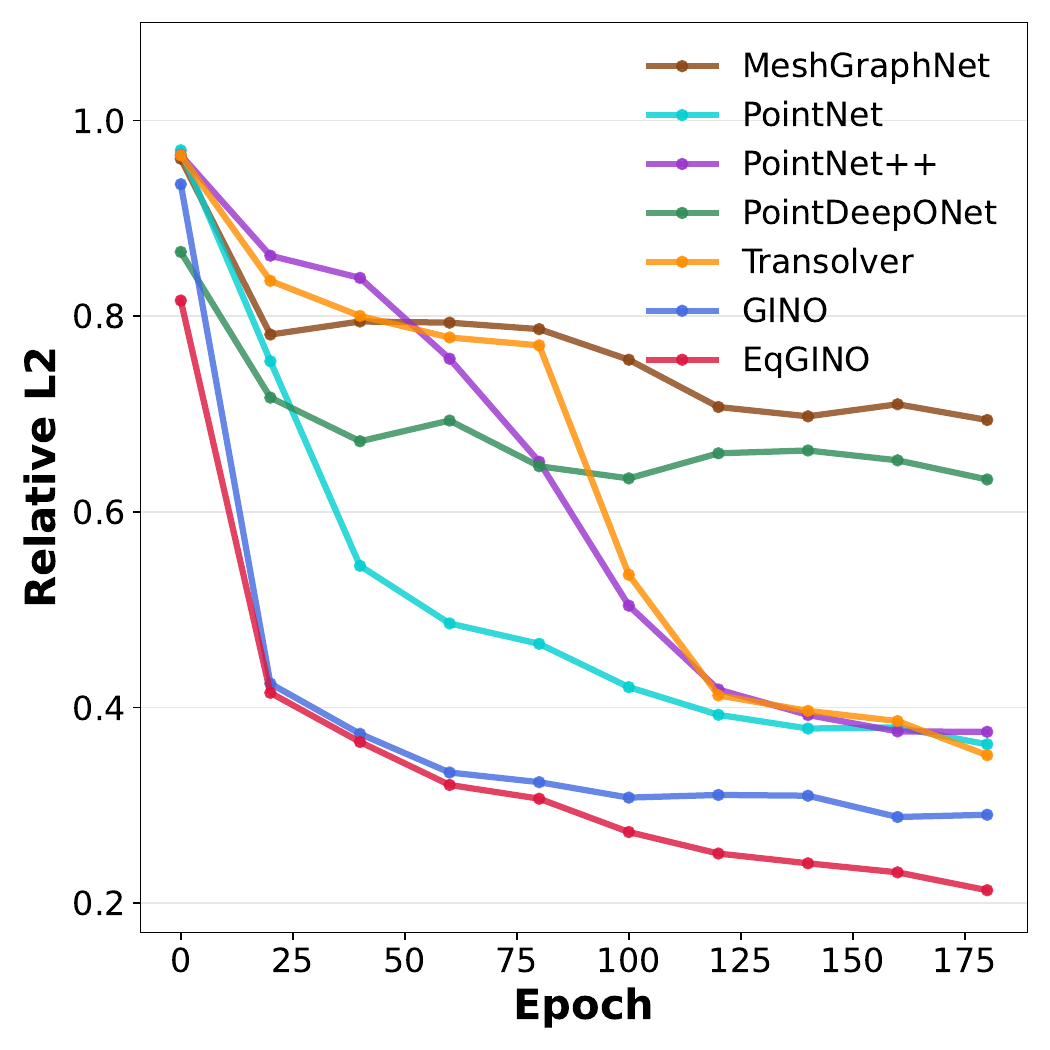}  
        \vspace{-1.7em}
        \caption{}
        \label{fig:continuous_curve}
    \end{subfigure}
    \hfill
    \begin{subfigure}[b]{0.65\linewidth}
        \centering
        \includegraphics[width=\linewidth]{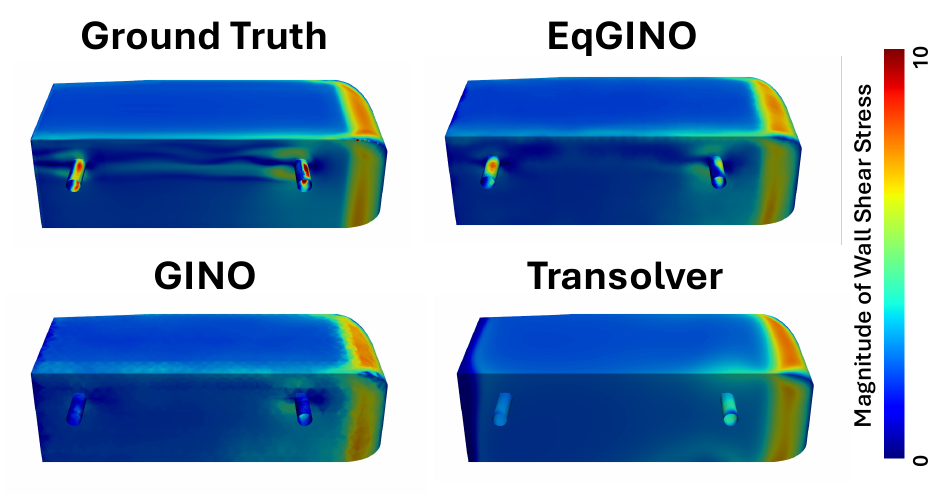}
        \vspace{-1.7em}
        \caption{}
        \label{fig:continuous_qual}
    \end{subfigure}
    \hfill
    
    \vspace{-0.1em}
    \caption{Evaluation of rotation robustness on the \textit{AhmedBody} dataset. (\textbf{a}) Relative $L_2$ error curves for models trained and validated under random rotation augmentation with continuous angles $\theta\in[0,360^\circ]$. (\textbf{b}) Magnitude of the predicted \textit{Wall Shear Stress} vectors for a test sample rotated by $58.7^\circ$ about the $x$-axis. Extended visualizations are provided in Appx.~\ref{apx:add_exp_ahmed}}
    \vspace{-1.7em}
\end{figure}

\subsection{Zero-Shot Generalization to Discrete Rotations}  
After training on data in canonical positions, we evaluate the models using randomly rotated test data. Specifically, we apply rotations with angles selected from {$0^\circ, 90^\circ, 180^\circ, 270^\circ$} around a randomly chosen axis ($x$, $y$, or $z$). As shown in Table~\ref{tbl:main_ab}b, our method, EqGINO, outperforms all baselines, including other equivariant models. Fig.~\ref{fig:equi_maintest} visualizes the predicted 3D deflection on the DeepJEB dataset under varying input geometric orientations. When the input shape is aligned with the canonical position (top row), errors are generally small across models. Fourier-based models (EqGINO, GINO) particularly demonstrate superior capability in modeling torsional loads, which requires capturing global behaviors. Conversely, when the input is rotated (e.g., upside-down), non-equivariant baselines fail significantly as they are overfitted to the canonical orientation (bottom row). These results highlight EqGINO’s unique ability to maintain precise global modeling while adhering to equivariance.
\vspace{-1.5em}
\paragraph{Can Data Augmentation Replace Equivariance?}
To investigate whether data augmentation can substitute for architectural equivariance, we trained non-equivariant baselines using data augmented with rotations identical to the test distribution. As illustrated in Fig.~\ref{fig:aug_test}, while augmentation improves general robustness, it fails to achieve uniform generalization across all orientations. In contrast, EqGINO benefits from intrinsic structural equivariance, consistently outperforming augmented baselines while requiring significantly fewer training samples. This confirms that learned equivariance falls short of exact architectural guarantees.

\subsection{Sample-Efficient Generalization to Continuous Rotations}

\begin{figure}[t!]
\begin{center}
\includegraphics[width=0.9\linewidth]{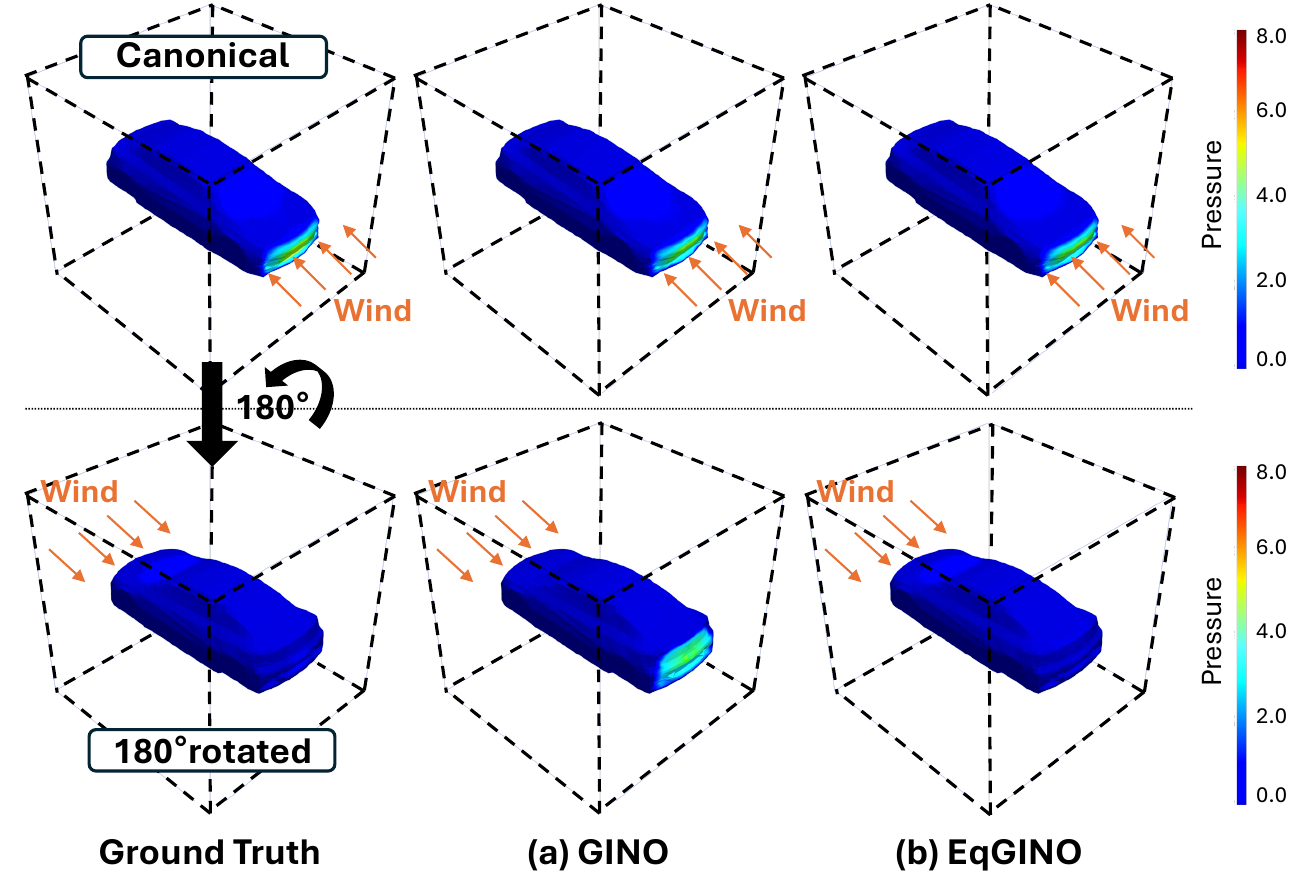}
\end{center}
\caption{Visualization of pressure predictions on the \textit{ShapeNetCar} dataset. All models were trained on the canonical trainset. The rows display input geometries in different orientations: \textbf{(Top)} Canonical input; \textbf{(Bottom)} Input rotated by $180^\circ$. Orange arrows indicate the wind direction. Colors represent the pressure magnitude. Extended visualizations for additional baselines are provided in Appx.~\ref{apx:add_exp_shapenetcar}.}
\label{fig:qualitative}
\vspace{-1.2em}
\end{figure}

While EqGINO guarantees exact equivariance within the discrete subgroup ($O$), real-world scenarios often involve arbitrary rotations in the continuous domain ($SO(3)$). To extend our model's robustness to these continuous variations, we incorporate rotational augmentation sampled from the full $SO(3)$ group during training. In Table~\ref{tbl:main_c}, EqGINO consistently outperforms all baselines on the SE(3)-transformed test dataset. Interestingly, as shown in Fig.~\ref{fig:continuous_curve}, we observe that EqGINO generalizes to these unseen continuous orientations significantly faster and achieves superior performance compared to the baselines. Specifically, Fig.~\ref{fig:continuous_qual} demonstrates that EqGINO accurately captures detailed physical behaviors on complex geometries—such as the turbulent regions around pillar structures—even when the input is arbitrarily rotated (e.g., $58.7^\circ$). 
This suggests that the inherent discrete equivariance provides a strong structural foundation, which enables the model to learn the remaining continuous symmetries more efficiently than models lacking this geometric prior.

\subsection{Qualitative Analysis of Coordinate Dependency}

We conduct a qualitative evaluation to verify the presence of coordinate dependency in the baseline models. Fig.~\ref{fig:qualitative} visualizes the pressure field predictions of GINO and EqGINO, both trained exclusively on canonical poses. As observed in the Ground Truth, aerodynamic pressure is concentrated on the front bumper due to the incoming wind. However, GINO fails to adapt to the rotated scenario (Fig.~\ref{fig:qualitative}a); it erroneously predicts high pressure on the rear bumper simply because rotation places it in the spatial region previously occupied by the front bumper. This suggests that coordinate-dependent models overfit to the absolute coordinate system of the training data rather than learning the physical relation between geometry and fluid dynamics. In contrast, EqGINO (Fig.~\ref{fig:qualitative}b) consistently identifies the high-pressure region on the actual front bumper regardless of its spatial location. This demonstrates that our model relies on intrinsic geometric features rather than extrinsic coordinates, avoiding the overfitting issues inherent in coordinate-dependent approaches.

Additional results on discretization-convergence and ablation studies are provided in the Appx.~\ref{apx:add_exp}.

\section{Conclusion}
\label{sec:conclusion}

In this paper, we presented EqGINO, a geometrically robust framework for 3D PDE surrogate modeling. By fundamentally redesigning the core processing units into standalone equivariant modules, our approach achieves spectral isotropy via an \textit{Orbit-based Weight Sharing} strategy, significantly reducing parameter complexity while ensuring physical consistency. Empirically, EqGINO demonstrates superior robustness across diverse 3D physics benchmarks. While state-of-the-art solvers struggle to generalize across geometric transformations due to their reliance on absolute coordinates, our architecture effectively mitigates this dependency without compromising performance in canonical settings.
Future work will focus on strictly enforcing continuous equivariance beyond grid limitations and reducing the overall computational cost to accommodate million-scale industrial geometries.

\section*{Acknowledgements}
\label{sec:acknowledgements}
This work was supported by LG Electronics and the National Research Foundation of Korea (NRF) grants funded by the Korea government (MSIT) (RS-2024-00335098, RS-2022-NR068758).

\section*{Impact Statement}
\label{sec:impactstatement}

This paper presents work whose goal is to advance the field of Machine
Learning. There are many potential societal consequences of our work, none
which we feel must be specifically highlighted here.

\bibliography{eqgino}
\bibliographystyle{icml2026}

\newpage
\appendix
\onecolumn
\label{sec:appendix}

\section{Implementation Details and Experimental Settings}
\label{app:implementation}

In this section, we provide a comprehensive description of the hyperparameters and training configurations used in our experiments to ensure reproducibility. 

\subsection{General Training Configurations}
To ensure a fair comparison, we maintained consistent optimization settings across all baseline models and our proposed method. We utilized the Adam optimizer for all experiments. The initial learning rate was tuned via grid search within the candidate set $\{1 \times 10^{-4}, 5 \times 10^{-4}, 1 \times 10^{-3}\}$. Due to the substantial memory requirements associated with processing high-density 3D meshes and point clouds, a batch size of 1 was employed. Consequently, Batch Normalization \cite{ioffe2015batch} was deemed unsuitable due to unstable statistics. We therefore substituted it with Group Normalization \cite{wu2018group} specifically for the point-based models: PointNet, PointNet++, and PointDeepONet. Regarding model capacity, all baseline architectures, with the exception of the GINO baseline, were configured with a hidden dimension of 128. For our proposed EqGINO and the GINO baseline, the hidden dimension was set to 64.

\subsection{Model-Specific Architectures}
For all baseline models, we utilized the official source codes whenever available to ensure reproduction of the original methods. Specifically, we implemented PointDeepONet and TEMNN from scratch, as their official implementations were not publicly accessible. 

For point-based baselines (PointNet, PointNet++, and PointDeepONet), we adapted the architectures for regression tasks on 3D fields. Notably, for the torsional loading case in PointDeepONet, the model cannot directly process global direction vectors. To address this, we explicitly calculated the force magnitude and direction corresponding to the torque at each node and incorporated these as input node features. 

Among Graph Neural Networks, MeshGraphNet explicitly utilized absolute coordinate inputs. In contrast, EGNN and EMNN operated without absolute coordinate inputs. TEMNN employed a specialized invariant coordinate system constructed intrinsically from the mesh structure. For the Transformer-based baseline, we customized the vanilla Transolver architecture to enforce SE(3)-equivariance (denoted as \textit{Transolver}** in Table~\ref{tbl:main_ab}). The input node features for this model were specifically designed to represent the local geometry of the nodes. Specifically, for the \textbf{AhmedBody} and \textbf{ShapeNetCar} datasets, we computed invariant geometric descriptors using the \texttt{trimesh} library: \textit{vertex degrees}, representing local connectivity, and \textit{vertex defects} (angle defects), representing discrete Gaussian curvature calculated as the deviation of the sum of incident face angles from $2\pi$. For the \textbf{DeepJEB} dataset, we incorporated physics-informed geometric features, including the distance from fixed points, the distance from load application points, the dot product between the force vector and the surface normal, and the force magnitude.

GINO and our proposed EqGINO were constructed with 4 Fourier Neural Operator (FNO) layers. The integration radius for the Graph Neural Operator (GNO) was tuned specifically for each dataset. For AhmedBody ($r=0.1$) and ShapeNetCar ($r=0.15$), the geometries were scaled to the range $[-1, 1]$. For DeepJEB, the radius was set to $r=10.0$, and the data was processed at its original scale. A critical distinction of our EqGINO architecture is its independence from explicit coordinate inputs, relying solely on intrinsic geometric information, whereas the GINO typically requires coordinate features. Regarding resolution configurations, the GINO baseline utilized $K=32$. For EqGINO, we evaluated two configurations: $K=32$ with a grouping factor of $G=2$, and $K=40$ with a grouping factor of $G=4$.

\begin{table}[H]
    \centering
    \caption{Summary of node features. Coordinate-dependent features serve non-equivariant baselines, while invariant features ensure SE(3)-invariance for our model.}
    \label{tab:node_features}
    \small
    \renewcommand{\arraystretch}{1.2}
    \begin{tabularx}{\textwidth}{l >{\raggedright\arraybackslash}p{3.5cm} X}
        \toprule
        \textbf{Dataset} & \textbf{Coord-Dependent} \newline \textit{(Non-Equi Only)} & \textbf{Invariant Geometric \& Physical Features} \newline \textit{(Equivariant \& All Models)} \\
        \midrule
        
        \textbf{AhmedBody} & $\mathbf{x}$ (Abs. Coord), \newline $\mathbf{u}_{\text{in}}$ (Inlet Velocity) & \textbf{Geom:} Vertex degrees, Vertex defects (discrete Gaussian curvature, angle deviation from $2\pi$). \newline \textbf{Phys:} Projected Inlet Velocity ($\mathbf{u}_{\text{in}} \cdot \mathbf{n}$). \\
        \cmidrule(lr){1-3}
        
        \textbf{ShapeNetCar} & $\mathbf{x}$ (Abs. Coord), \newline $\mathbf{u}_{\text{in}}$ (Inlet Velocity) & \textbf{Geom:} Vertex degrees, Vertex defects (same as above). \newline \textbf{Phys:} Projected Inlet Velocity ($\mathbf{u}_{\text{in}} \cdot \mathbf{n}$). \\
        \cmidrule(lr){1-3}
        
        \textbf{DeepJEB} & $\mathbf{x}$ (Abs. Coord) & \textbf{Geom:} Geodesic dist. from fixed boundary points, Geodesic dist. from force application points. \newline \textbf{Phys:} Projected Force ($\mathbf{F} \cdot \mathbf{n}$), Force Magnitude ($|\mathbf{F}|$). \\
        \bottomrule
    \end{tabularx}
\end{table}

\section{Dataset Specifications} \label{apx:data_detail}
The fluid dynamics simulations for both the AhmedBody and ShapeNetCar datasets are based on solving the Reynolds-Averaged Navier-Stokes (RANS) equations, which represent the time-averaged form of the incompressible Navier-Stokes equations.
These equations take the form of the momentum balance and continuity equations:
\begin{equation}
\begin{aligned}
    \rho(\mathbf{u} \cdot \nabla)\mathbf{u} &= -\nabla p + \mu \nabla^2 \mathbf{u} + \mathbf{f}, \\
    \nabla \cdot \mathbf{u} &= 0
\end{aligned}
\end{equation}
where $\mathbf{u}$ is the velocity field, $p$ is the pressure, $\rho$ is the fluid density, $\mu$ is the dynamic viscosity, and $\mathbf{f}$ represents body forces \cite{ferziger2019computational}. 
Both datasets consists of fluid dynamics simulations computed under the same RANS framework and are selected to assess the ability of EqGINO to efficiently handle large-scale simulations (\textit{AhmedBody}) while maintaining robust generalization across complex and diverse geometric configurations (\textit{ShapeNetCar}).
\paragraph{AhmedBody.}
The \textit{AhmedBody} dataset \cite{ahmed1984some} represents a canonical problem in vehicle aerodynamics. The dataset comprises simulations of parametrized car-like shapes subject to turbulent airflow. 
The model predicts the vector-valued \textit{wall shear stress} alongside four scalar physical quantities: \textit{pressure, turbulent kinetic energy, turbulent viscosity,} and \textit{specific dissipation rate (omega)}.
All target physical fields are evaluated on the surface mesh of the geometry.
We split the dataset into 413 simulations for training, 45 simulations for validation, and 50 simulations for testing.

\paragraph{ShapeNetCars.}
To complement the AhmedBody benchmark and assess the robustness of EqGINO to geometric diversity, we additionally adopt the \textit{ShapeNetCar} dataset.
This benchmark derives from the car category of \textit{ShapeNet} \cite{chang2015shapenet} and includes diverse vehicle shapes. The fluid dynamics simulations operate with a fixed inlet velocity of $20\,\mathrm{m/s}$ ($72\,\mathrm{km/h}$) and a Reynolds number of $5 \times 10^6$ \cite{umetani2018learning}. Each sample discretizes the car surface into approximately 3,700 mesh points. We partition 600 watertight instances into 450 samples for training, 50 samples for validation, and the last 100 samples for testing. The model targets the estimation of scalar \textit{pressure} field across the surface mesh.

\paragraph{DeepJEB.}
To verify the model's applicability to diverse physical systems, we include a solid mechanics benchmark beyond fluid dynamics.
The \textit{DeepJEB} dataset \cite{hong2025deepjeb} serves as a standard benchmark for analyzing the structural response of jet engine brackets. 
The primary objective is the prediction of the vector-valued \textit{deflection} field and the scalar \textit{von Mises stress}. 
The dataset is simulated under linear static load cases (vertical, horizontal, diagonal, and torsional).
Regarding dataset organization, we allocate a random 10\% of training samples for validation.
The dataset comprises 1,776 samples for training, 162 for validation, and 197 for testing. 

The structural responses in \textit{DeepJEB} are computed under the assumption of 3D linear static isotropic elasticity with small-strain kinematics. The dataset labels represent solutions to the equilibrium equation \cite{landau2012theory}:
\begin{equation}
\nabla \cdot \boldsymbol{\sigma}(\mathbf{u}) = \mathbf{0},
\end{equation}
\noindent where $\mathbf{u}$ denotes the displacement field and $\boldsymbol{\sigma}$ is the stress tensor related to strain via the linear isotropic constitutive law parameterized by Young's modulus $E = 113.8\,\mathrm{GPa}$ and Poisson's ratio $\nu = 0.342$ for the Ti–6Al–4V material \cite{hong2025deepjeb}. This system is solved via the finite element method under linear static load cases.

\section{Evaluation Metrics} \label{apx:metrics}

To evaluate the predictive accuracy of our proposed model and the baseline methods, we employ the \textit{relative $L_2$ error}. Unlike absolute metrics such as the standard $L_1$ or $L_2$ errors, which are sensitive to the original scales of the data, this metric measures the deviation of the predicted physical fields from the ground truth solutions, normalized by the magnitude of the reference. This normalization ensures that the error metric remains scale-invariant, thereby facilitating fair comparisons across diverse samples with varying physical magnitudes.

Formally, let $\mathbf{y}^{(i)} \in \mathbb{R}^{\mathcal{V} \times d}$ denote the ground truth field values (e.g., pressure or velocity) for the $i$-th sample in the dataset, defined on a mesh with $\mathcal{V}$ nodes and $d$ components. Let $\hat{\mathbf{y}}^{(i)}$ denote the corresponding prediction generated by the model. The relative $L_2$ error for a single sample $i$, denoted as $\epsilon_i$, is defined as:

\begin{equation}
    \epsilon_i = \frac{\| \hat{\mathbf{y}}^{(i)} - \mathbf{y}^{(i)} \|_F}{\| \mathbf{y}^{(i)} \|_F},
\end{equation}

where $\| \cdot \|_F$ represents the Frobenius norm (equivalent to the Euclidean norm of the flattened vector) over the spatial domain.

For the final evaluation on the test set containing $S$ samples, we report the mean relative $L_2$ error averaged over all samples:

\begin{equation}
    \mathcal{E}_{\text{test}} = \frac{1}{S} \sum_{i=1}^{S} \epsilon_i.
\end{equation}

\section{Computational Efficiency and Complexity Analysis}
\label{app:complexity}

\subsection{Hardware and Software Infrastructure}
All experiments were conducted on a workstation running \textbf{Rocky Linux 9.3}. The system is equipped with an \textbf{Intel Xeon Gold 6530 CPU} and \textbf{512GB of RAM}. We utilized \textbf{NVIDIA L40S GPUs}, each with \textbf{48GB of VRAM}. The software environment was configured with \textbf{CUDA 12.4} and PyTorch.

\subsection{Complexity and Latency Comparison}
Table~\ref{tab:complexity_comparison} presents a comprehensive comparison of model complexity, computational cost, and inference latency. We report the total GFLOPs measured during the forward pass, including all operations such as FFT/IFFT and spectral convolutions.

\begin{table}[h]
\centering
\small
\renewcommand{\arraystretch}{1.1}
\setlength{\tabcolsep}{4pt}
\begin{tabular}{l|c|cc|c}
\toprule
\textbf{Model} & \textbf{Params (M)} & \textbf{Inference (ms)} & \textbf{Training (ms)} & \textbf{GFLOPs} \\ 
\midrule
PointNet       & 3.5  & 3.9   & 4.6   & 1.0   \\
PointNet++     & 1.4  & 139.2 & 112.6 & 2.4   \\
PointDeepONet  & 0.2  & 3.1   & 3.8   & 1.5   \\
\midrule
EGNN           & 0.3  & 2.9   & 4.9   & 8.2   \\
EMNN           & 0.7  & 10.7  & 10.6  & 15.6  \\
TEMNN          & 0.8  & 6.6   & 5.8   & 13.5  \\
MeshGraphNet   & 0.4  & 3.6   & 4.4   & 10.5  \\
\midrule
Transolver     & 1.5  & 9.1   & 14.6  & 11.1  \\
\midrule
GINO           & 285.0 & 63.7  & 64.6  & 153.1 \\
\textbf{EqGINO} & \textbf{4.1}   & \textbf{52.6}  & \textbf{57.7}  & \textbf{85.7}  \\ 
\bottomrule
\end{tabular}
\caption{Comparison of computational complexity and latency.}
\label{tab:complexity_comparison}
\end{table}

\paragraph{Discussion.} 
As shown in Table~\ref{tab:complexity_comparison}, our proposed EqGINO significantly reduces model size compared to the spectral baseline, GINO. The parameter count is reduced by approximately 98\% (285M $\to$ 4.1M) primarily due to the proposed \textit{Orbit-based Weight Sharing} mechanism. Regarding computational cost, EqGINO reduces the total GFLOPs by roughly 44\% (153.1 $\to$ 85.7 GFLOPs), a gain attributed to combined factors such as the exclusion of explicit coordinate features and the utilization of channel grouping.

We acknowledge that EqGINO entails higher computational costs compared to lightweight baselines, such as Transolver. This is primarily attributed to the graph construction phase within the GNO modules. Even with hash grid-based optimizations, the neighbor search process scales with a computational complexity of $O(Ndr^3)$  \cite{gino}(where $N, d, r$ denote the number of points, density, and radius, respectively). While manageable for standard benchmarks, this cubic dependency becomes a critical bottleneck for \textit{million-scale datasets}, where the computational burden and memory usage for neighbor queries become prohibitive.

However, a critical advantage of EqGINO is its ability to embed equivariance directly into the spectral domain without an increase in computational complexity compared to the GINO baseline. This intrinsic design proves superior to extrinsic strategies; as demonstrated in Table~\ref{tbl:main_ab}b and \ref{tbl:main_c}, even when baselines are trained with extensive data augmentation, they fail to match the robust equivariant performance of our method. While EqGINO successfully incorporates SE(3)-equivariance into the spectral framework, computational cost reduction for million-scale datasets remains a promising direction for future work.

\section{Equivariance for EqGNO} \label{sec:eq_for_encoder}
In this section, we provide a proof of equivariance for the EqGNO encoder. The proof for the decoder follows an analogous logic.

Recall that we parameterize the (learnable) kernel function in the encoder solely by relative geometric quantities. Under uniform density, let $\bar{y}$ denote the centroid of the input point cloud $\mathcal{P}$. We use the relative distance $\|x - y_j\|$ and the centroid-relative distance $\|x - \bar{y}\|$. This formulation guarantees that the kernel remains invariant under any rigid transformation $g \in SE(3)$ (where $g \cdot x = Rx + t$):
\begin{equation}
    \kappa(g \cdot x, g \cdot y_j; g \cdot \bar{y}) = \kappa(x, y_j; \bar{y}).
\end{equation}

Note that the centroid shifts consistently with the transformation ($g \cdot \bar{y} = R\bar{y} + t$), which preserves the relative distances.

\paragraph{Proof of SE(3)-Equivariance.}
We now verify that the encoded field preserves the SE(3) symmetry of the input. Let $\mathcal{E}$ denote the encoder that constructs a latent feature field $v_0$ and $\mathcal{P}' = g \cdot \mathcal{P} = \{y_j'\}_{j=1}^M$ be the transformed point cloud. The encoded field at a location $x$ becomes
\begin{equation}
\begin{aligned} 
    [\mathcal{E}(\mathcal{P}')](x) &\approx \sum_{j} \kappa(x, y'_j; \bar{y}') \mu_j \\
        &= \sum_{j} \kappa(x, g \cdot y_j; g \cdot \bar{y}) \mu_j.
\end{aligned}
\end{equation}
By the SE(3)-invariance property of the kernel ($\kappa(x, g \cdot y) = \kappa(g^{-1} \cdot x, y)$), we obtain
\begin{equation}
\begin{aligned}
    [\mathcal{E}(\mathcal{P}')](x) &\approx \sum_{j} \kappa(g^{-1} \cdot x, y_j; \bar{y})\mu_j \\
    &\approx v_0(g^{-1} \cdot x) \\
    &= [T_g^*v_0](x).
\end{aligned}
\end{equation}

This confirms that encoding the transformed point cloud results in the pullback of the original encoded field. Thus, the output of the encoder becomes a valid SE(3)-equivariant feature field, which yields a geometrically consistent initialization for the subsequent spectral layers.

\section{Proof of Lemma~\ref{lemma:spectral_rotation} (Spectral Rotation Property)} \label{sec:lemma4.1}
\textbf{Lemma~\ref{lemma:spectral_rotation}}
\label{apx:lemma:spectral_rotation}
Let $f \in L^2(\mathbb{R}^3)$ be a scalar field and $\mathcal{T}_R$ be the rotation operator defined by $[\mathcal{T}_R f](x) = f(R^{-1}x)$. The Fourier transform satisfies
\begin{equation} \label{eq:spectral_rot_proof}
    \widehat{\mathcal{T}_R f}(\mathbf{k}) = \hat{f}(R^{-1}\mathbf{k}).
\end{equation}

\textit{Proof.} The Fourier transform of $[\mathcal{T}_R f](x)$ yields
\begin{equation}
\begin{aligned}
    \widehat{\mathcal{T}_R f}(\mathbf{k}) &= \int_{\mathbb{R}^3} \mathcal{T}_R (x) e^{-2i \pi \langle \mathbf{k}, x \rangle} dx \\ 
    &= \int_{\mathbb{R}^3} f(R^{-1}x) e^{-2i \pi \langle \mathbf{k}, x \rangle } dx 
\end{aligned}
\end{equation}

The change of variables $y = R^{-1}x$ implies $dx=dy$ (since $\det R = 1$) and $\langle \mathbf{k}, x \rangle = \langle \mathbf{k}, Ry \rangle = \langle R^{-1} \mathbf{k}, y \rangle$. Consequently,
\begin{equation}
\begin{aligned}
    \widehat{\mathcal{T}_R f}(\mathbf{k}) 
    &= \int_{\mathbb{R}^3} f(y) e^{-2i \pi \langle R^{-1} \mathbf{k}, y \rangle} dy \\
    &= \hat{f}(R^{-1}\mathbf{k}).
\end{aligned}
\end{equation}

Thus, a rotation in the spatial domain corresponds to an equivalent rotation of the frequency modes $\mathbf{k}$.

\section{Proof of Isotropic Spectral Convolution} \label{sec:iso_spec_conv}

Recall Eq.~\eqref{eq:equivariance_condition}. For any feature field $v_{in} \in \mathcal{H}_{in}$, the equivariant spectral convolution layer satisfies
\begin{equation}
    \mathcal{F} \left( \mathcal{L}(T_g^* v_{in}) \right) (\mathbf{k}) = \mathcal{F} \left( T_g^* (\mathcal{L} v_{in}) \right) (\mathbf{k}).
\end{equation}
Consider a rigid transformation $g=(R, \mathbf{t}) \in SE(3)$. In the spectral domain, the spatial translation $\mathbf{t}$ manifests as a scalar phase shift $e^{-2\pi i \langle \mathbf{k}, \mathbf{t} \rangle}$, whereas the rotation $R$ corresponds to fetching the value from $R^{-1}\mathbf{k}$. Since the spectral convolution is a linear operation, the scalar phase term appears identically on both sides of the equation and cancels out. Consequently, we limit our derivation to the rotational component $R$ without loss of generality.
We expand both sides using the property $\widehat{\mathcal{L}v_{in}}(\mathbf{k}) = W(\mathbf{k}) \widehat{v_{in}}(\mathbf{k})$ and the group action defined via the pullback operator (Eq.~\eqref{eq: pullback operator}). For simplicity, we denote $\widehat{v_{out}}(\mathbf{k}) = \widehat{\mathcal{L}v_{in}}(\mathbf{k})$ as the output feature representation. 

\begin{itemize}
    \item \textbf{LHS (Spectral Convolution on transformed input):}
    First, the rotated input $T_g^* v_{in}$ yields Fourier coefficients $\rho_{in}(R^{-1}) \widehat{v_{in}}(R^{-1}\mathbf{k})$. The layer applies $W(\mathbf{k})$ to this:
    \begin{equation}
         \text{LHS} = W(\mathbf{k}) \cdot \left[ \rho_{in}(R^{-1}) \widehat{v_{in}}(R^{-1}\mathbf{k}) \right].
    \end{equation}
    
    \item \textbf{RHS (Transformed output of spectral convolution):}
    The right-hand side represents the transformed output $T_g^* v_{out}$. By Lemma 4.1 with the output representation $\rho_{out}$, we have
    \begin{equation}
        \text{RHS} = \rho_{out}(R^{-1}) \cdot \widehat{v_{out}}(R^{-1}\mathbf{k}) = \rho_{out}(R^{-1}) \cdot \left[ W(R^{-1}\mathbf{k}) \widehat{v_{in}}(R^{-1}\mathbf{k}) \right].
    \end{equation}
\end{itemize}

Equating LHS and RHS for an arbitrary input component $\widehat{v_{in}}(R^{-1}\mathbf{k})$, we obtain
\begin{equation}
    W(\mathbf{k})\rho_{in}(R^{-1}) = \rho_{out}(R^{-1})W(R^{-1}\mathbf{k}).
\end{equation}
To express this constraint for the weight at a rotated frequency, let $\mathbf{q} = R^{-1}\mathbf{k}$ (which implies $\mathbf{k} = R\mathbf{q}$). Substituting $\mathbf{k}$ with $R\mathbf{q}$ yields
\begin{equation}
    W(R\mathbf{q}) \rho_{in}(R^{-1}) = \rho_{out}(R^{-1}) W(\mathbf{q}).
\end{equation}
Finally, right-multiplying by $\rho_{in}(R^{-1})^{-1} = \rho_{in}(R)$, we derive the \textit{general weight conjugation constraint}:
\begin{equation} \label{eq: general weight conjugation constraint}
    W(R\mathbf{k}) = \rho_{out}(R^{-1}) W(\mathbf{k}) \rho_{in}(R).
\end{equation}

\textbf{\textit{Remark A.1. Group Representations and Properties}}
A \textit{group representation} $\rho$ of $\mathcal{G}$ on a vector space $V = \mathbb{R}^d$ is a group homomorphism $\rho: \mathcal{G} \to GL(V)$, where $GL(V)$ denotes the general linear group of invertible $d \times d$ matrices. Being a homomorphism implies two key properties:
\begin{enumerate}
    \item Identity Preservation: $\rho(I_{3}) = I_{d}$.
    \item Multiplicativity: $\rho(R_1 R_2) = \rho(R_1)\rho(R_2)$ for all $R_1, R_2 \in SO(3)$.
\end{enumerate}

From the definition, $\rho(R) \rho(R^{-1}) = \rho(R R^{-1}) = \rho(I_3) = I_d$.
Thus, $\rho(R^{-1})$ is indeed the unique matrix inverse of $\rho(R)$.

\textbf{\textit{Remark A.2. Output Representation for Vector Fields}}
In the case of vector field prediction (e.g., displacement), the output dimension $d_{out}$ transforms non-trivially. If $d_{out}$ consists of $m$ numbers of 3D vectors, the representation $\rho_{out}(R^{-1})$ is the direct sum of $m$ rotation matrices $R^T$:
\begin{equation}
    \rho_{out}(R^{-1}) = \underbrace{R^T \oplus R^T \oplus \dots \oplus R^T}_{m \text{ times}} = 
    \begin{bmatrix}
    R^T & 0 & \cdots & 0 \\
    0 & R^T & \cdots & 0 \\
    \vdots & \vdots & \ddots & \vdots \\
    0 & 0 & \cdots & R^T
    \end{bmatrix} \in \mathbb{R}^{d_{out} \times d_{out}}
\end{equation}
The transpose $R^T$ ensures that vector fields transform contravariantly under rotations, thereby preserving rotation equivariance of the prediction \cite{weiler20183d}.

\textbf{\textit{Remark A.3. Challenges in vector field prediction and the need for a Local Basis}}
As mentioned in Section \ref{met:EqFNO}, we only use scalar feature fields which yields $\rho_{in}(R) = I_{d_{in}}$. Ideally, an equivariant operator with vector-valued outputs should satisfy the conjugation constraint $W(R\mathbf{k}) = \rho_{out}(R^{-1}) W(\mathbf{k})$. However, in practical engineering simulations, the input geometry often lacks a predefined canonical pose. Without a canonical reference, the rotation matrix $R$ is unknown, making it impossible to determine the inverse rotation $R^{-1}$ required to enforce weight constraints. Consequently, the model cannot distinguish between the intrinsic geometry of the object and its arbitrary orientation in the global coordinate system.

For this reason, we impose the isotropy condition on the learnable weight matrix only when predicting scalar physical quantities. In contrast, predicting vector-valued physical fields poses an additional challenge due to the absence of a known canonical pose. To address this issue, the construction of a local basis is essential, as described in Section \ref{met:Local Basis}.

\section{Incompatibility of RFFT with Rotation Equivariance} \label{sec:rfft}
In this section, we rigorously demonstrate that the Real-FFT (RFFT) format in 3D cannot support rotation-equivariant operations due to the violation of complex linearity.

Let $f: \mathbb{R}^3 \to \mathbb{R}$ be a real-valued input signal. Its Discrete Fourier Transform $\hat{f}$ satisfies the \textit{Hermitian symmetry} property:
\begin{equation}
    \hat{f}(-\mathbf{k}) = \overline{\hat{f}(\mathbf{k})}, \quad \forall \mathbf{k} \in \mathbb{Z}^3.
\end{equation}
To improve memory efficiency, RFFT stores Fourier coefficients only for the half-space where the last dimension index is non-negative. We define the \textit{stored domain} $\mathcal{D}_R$ as
\begin{equation}
    \mathcal{D}_R = \left\{ (k_x, k_y, k_z) \in \mathbb{Z}^3 \;\middle|\; -\frac{N}{2} \le k_x, k_y < \frac{N}{2}, \; 0 \le k_z \le \frac{N}{2} \right\}.
\end{equation}
For any frequency $\mathbf{k} \notin \mathcal{D}_R$, the value is implicitly reconstructed via conjugation:
\begin{equation} \label{eq:hermitian_recon}
    \hat{f}(\mathbf{k}) := \overline{\hat{f}(-\mathbf{k})}.
\end{equation}

A standard neural operator layer in the frequency domain operates as a complex linear transformation $\hat{g}(\mathbf{k}) = W(\mathbf{k})\hat{f}(\mathbf{k})$, where $W(\mathbf{k}) \in \mathbb{C}^{C \times C}$. For rotation equivariance to hold, the rotation operation itself must be representable as a linear map on the feature space. However, Eq.~\eqref{eq:hermitian_recon} forces the operation to output the conjugate $\overline{\hat{f}(\mathbf{-k})}$. Complex conjugation operation $\mathcal{C}$ which is defined by $\mathcal{C}(z)=\bar{z}$ is anti-linear (or conjugate-linear) operation. Specifically, for a scalar $\alpha \in \mathbb{C}$ with $\mathrm{Im}(\alpha) \neq 0$ and a variable $z$,
\begin{equation}
    \mathcal{C}(\alpha z) = \bar{\alpha}\bar{z} = \bar{\alpha} \mathcal{C}(z).
\end{equation}
Since the RFFT-induced rotation forces a conjugation for certain modes, the transformation cannot be represented by any complex linear matrix multiplication $W(\mathbf{k})$. Therefore, the RFFT format is structurally incompatible with learning rotation-equivariant representations using complex-linear layers.

As established in the 3D case, the fundamental issue stems from the RFFT storage mechanism, which truncates the spectral domain along a specific axis. For any dimension $d \ge 2$, there exists some rotation that maps a coordinate from the stored half-space (of Fourier modes) to the unstored half-space, necessitating conjugate retrieval. 
Since the resulting anti-linear operation breaks the complex-linearity assumption of the neural operator, we conclude that Full-FFT is strictly required to preserve exact rotation equivariance across any rotation $R \in SO(3)$.

\section{Local Basis Construction and Proof of SE(3)-Equivariance} \label{sec:local_basis}
Local Basis strategy enables the prediction of invariant projection coefficients with respect to rigid transformation. We construct a local reference frame $\{ \mathbf{e}_{i,1}, \mathbf{e}_{i,2}, \mathbf{e}_{i,3} \}$ at each point $i$, derived intrinsically from the input physical features. Decompose the target field $\textbf{v}$ into
\begin{equation}
    \mathbf{v}_i = c_{i,1} \mathbf{e}_{i,1} + c_{i,2} \mathbf{e}_{i,2} + c_{i,3} \mathbf{e}_{i,3}.
\end{equation}
Since both the basis vectors $\mathbf{e}_{i,k}$ and the vector-valued field $\mathbf{v}_i$ rotate simultaneously with the object, the projection coefficients $c_{i,k} = \mathbf{v}_i \cdot \mathbf{e}_{i,k}$ remain invariant under rigid body transformations. A linear combination of these predicted coefficients with the transformed local bases recovers the final equivariant vector field.

\paragraph{Construction of Local Basis}
For the construction of a local basis, we leverage the global external force vector for the \textit{DeepJEB} \cite{hong2025deepjeb} and the inlet velocity vector for the \textit{AhmedBody} \cite{ahmed1984some} and \textit{ShapeNetCar} \cite{umetani2018learning, chang2015shapenet} datasets. For simplicity, denote such an external physical vector as $\mathbf{F}$. Let $\mathbf{x}_i$ be the position of the $i$-th node and $\mathbf{c} = \frac{1}{N}\sum \mathbf{x}_j$ be the centroid of the point cloud. Note that the relative position vector $\mathbf{r}_i := \mathbf{x}_i - \mathbf{c}$ is translation-invariant. The orthonormal basis vectors $\{ \mathbf{e}_{i,1}, \mathbf{e}_{i,2}, \mathbf{e}_{i,3} \}$ are constructed as follows:

\begin{itemize}
    \item \textbf{Axis 1:} Align the first axis with a global physical vector $\mathbf{F}$, as it provides the primary change in physical quantity.
    \begin{equation}
        \mathbf{e}_{i,1} = \frac{\mathbf{F}}{\|\mathbf{F}\|}
    \end{equation}
    \item \textbf{Axis 2:} To capture the local orientation, we use the cross product of the relative position and the primary vector.
    \begin{equation}
        \mathbf{e}_{i,2} = \frac{\mathbf{r}_i \times \mathbf{e}_{i,1}}{\|\mathbf{r}_i \times \mathbf{e}_{i,1}\|}
    \end{equation}
    \item \textbf{Axis 3:} The basis is completed by the right-hand rule.
    \begin{equation}
        \mathbf{e}_{i,3} = \mathbf{e}_{i,1} \times \mathbf{e}_{i,2}
    \end{equation}
\end{itemize}

The resulting local basis matrix is $B_i = [\mathbf{e}_{i,1}, \mathbf{e}_{i,2}, \mathbf{e}_{i,3}]$. 

\paragraph{Proof of Local Basis SE(3)-Equivariance}
Let $g = (R, \mathbf{t}) \in SE(3)$ be a rigid body transformation defined by a rotation matrix $R \in SO(3)$ and a translation vector $\mathbf{t} \in \mathbb{R}^3$. The transformed inputs are
\begin{equation}
    \mathbf{x}_i' = R\mathbf{x}_i + \mathbf{t}, \quad \mathbf{F}' = R\mathbf{F}.
\end{equation}
Note that  $\mathbf{F}$ is a vector quantity and thus invariant to translation. The related position vector $\mathbf{r}_i$ is translation-invariant and rotates with the object
\begin{equation}
    \mathbf{r}_i' = \mathbf{x}_i' - \mathbf{c}' = (R\mathbf{x}_i + \mathbf{t}) - (R\mathbf{c} + \mathbf{t}) = R(\mathbf{x}_i - \mathbf{c}) = R\mathbf{r}_i.
\end{equation}

Derive the transformation of the axes using the property that rotation matrices preserve norms (isometry) and distribute over the cross product (i.e., $(R\mathbf{a}) \times (R\mathbf{b}) = R(\mathbf{a} \times \mathbf{b})$):

\begin{itemize}
    \item \textbf{Axis 1:}
    \begin{equation}
        \mathbf{e}_{i,1}' = \frac{\mathbf{F}'}{\|\mathbf{F}'\|} = \frac{R\mathbf{F}}{\|R\mathbf{F}\|} = R \frac{\mathbf{F}}{\|\mathbf{F}\|} = R \mathbf{e}_{i,1}
    \end{equation}
    
    \item \textbf{Axis 2:}
    \begin{equation}
        \mathbf{e}_{i,2}' = \frac{\mathbf{r}_i' \times \mathbf{e}_{i,1}'}{\|\mathbf{r}_i' \times \mathbf{e}_{i,1}'\|} 
        = \frac{(R\mathbf{r}_i) \times (R\mathbf{e}_{i,1})}{\|R(\mathbf{r}_i \times \mathbf{e}_{i,1})\|}
        = \frac{R(\mathbf{r}_i \times \mathbf{e}_{i,1})}{\|\mathbf{r}_i \times \mathbf{e}_{i,1}\|}
        = R \mathbf{e}_{i,2}
    \end{equation}
    
    \item \textbf{Axis 3:}
    \begin{equation}
        \mathbf{e}_{i,3}' = \mathbf{e}_{i,1}' \times \mathbf{e}_{i,2}' = (R\mathbf{e}_{i,1}) \times (R\mathbf{e}_{i,2}) = R(\mathbf{e}_{i,1} \times \mathbf{e}_{i,2}) = R \mathbf{e}_{i,3}
    \end{equation}
\end{itemize}

Since each basis vector rotates by $R$, the basis matrix $B_i = [\mathbf{e}_{i,1}, \mathbf{e}_{i,2}, \mathbf{e}_{i,3}]$ satisfies
\begin{equation}
    B_i' = [R\mathbf{e}_{i,1}, R\mathbf{e}_{i,2}, R\mathbf{e}_{i,3}] = R \cdot B_i.
\end{equation}

This confirms that the local basis is SE(3)-equivariant.

\section{Discretization-convergence and Ablation Studies} \label{apx:add_exp}
\begin{figure}[h]
    \centering
    \begin{subfigure}[b]{0.24\linewidth}
        \centering
        \includegraphics[width=\linewidth]{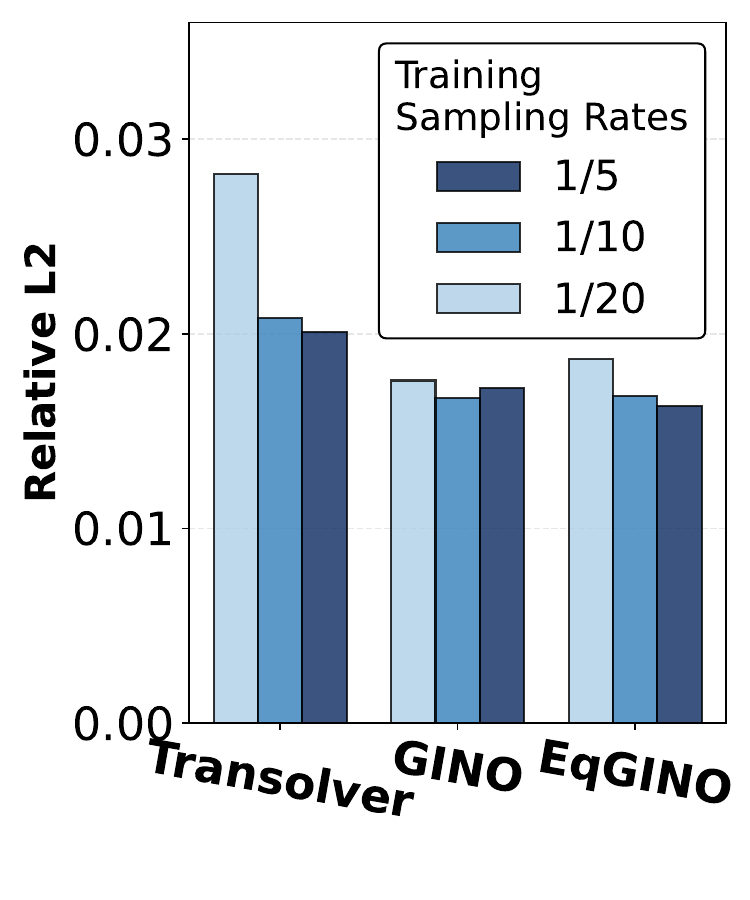}
        \vspace{-1.8em}
        \caption{} 
        \label{fig:res_inv_comparison}
    \end{subfigure}
    \hfill
    \begin{subfigure}[b]{0.24\linewidth}
        \centering
        \includegraphics[width=\linewidth]{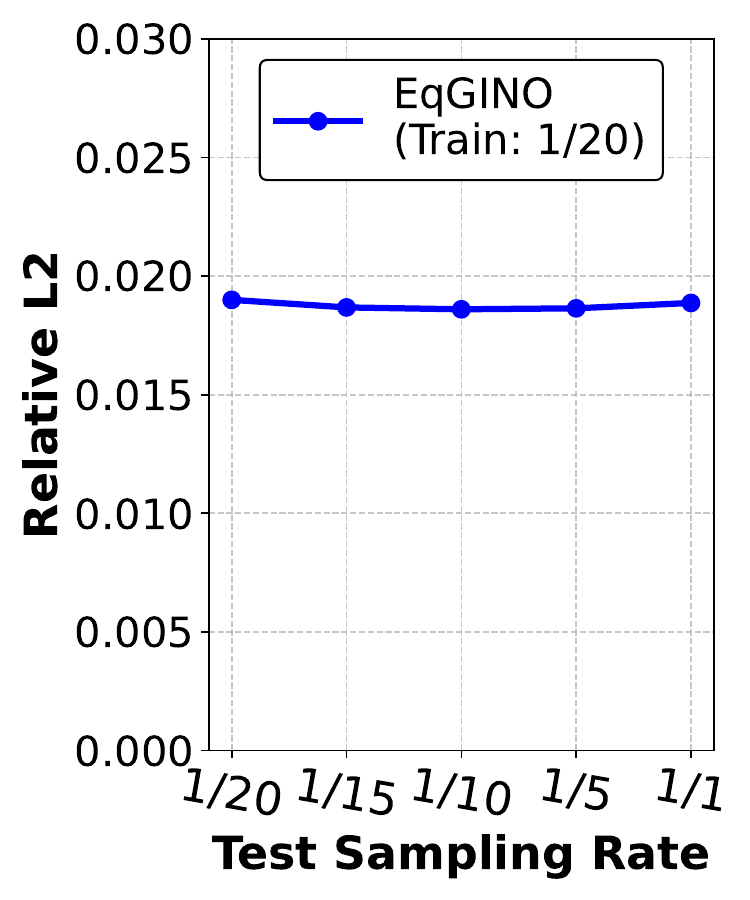}
        \vspace{-1.8em}
        \caption{} 
        \label{fig:ablation_sampling}
    \end{subfigure}
    \hfill
    \begin{subfigure}[b]{0.24\linewidth}
        \centering
        \includegraphics[width=\linewidth]
        {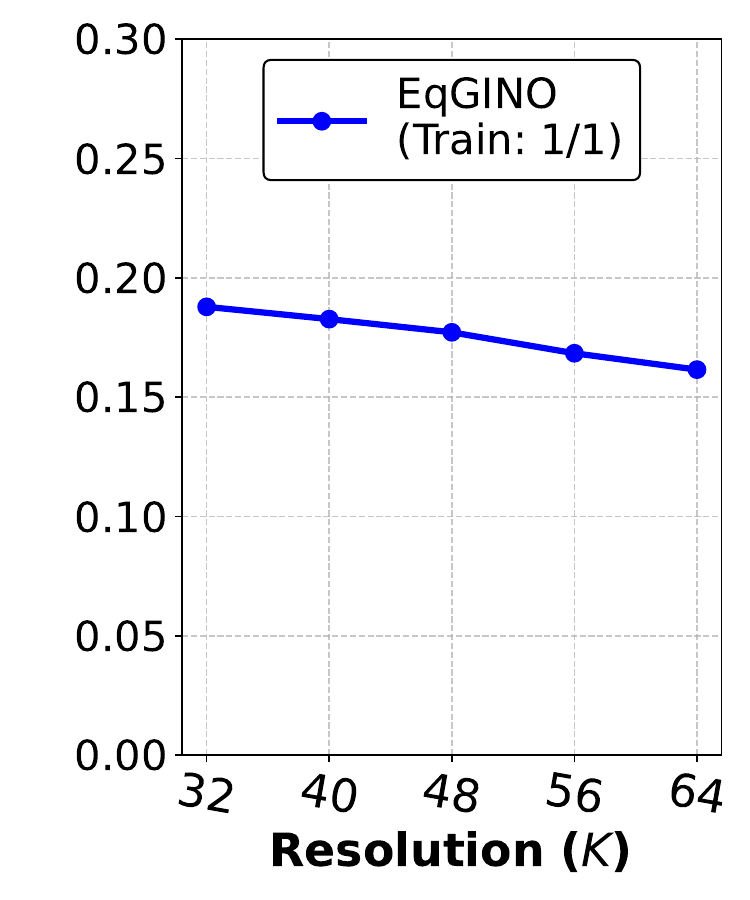}
        \vspace{-1.8em}
        \caption{} 
        \label{fig:ablation_s}
    \end{subfigure}
    \hfill
    \begin{subfigure}[b]{0.24\linewidth}
        \centering
        \includegraphics[width=\linewidth]{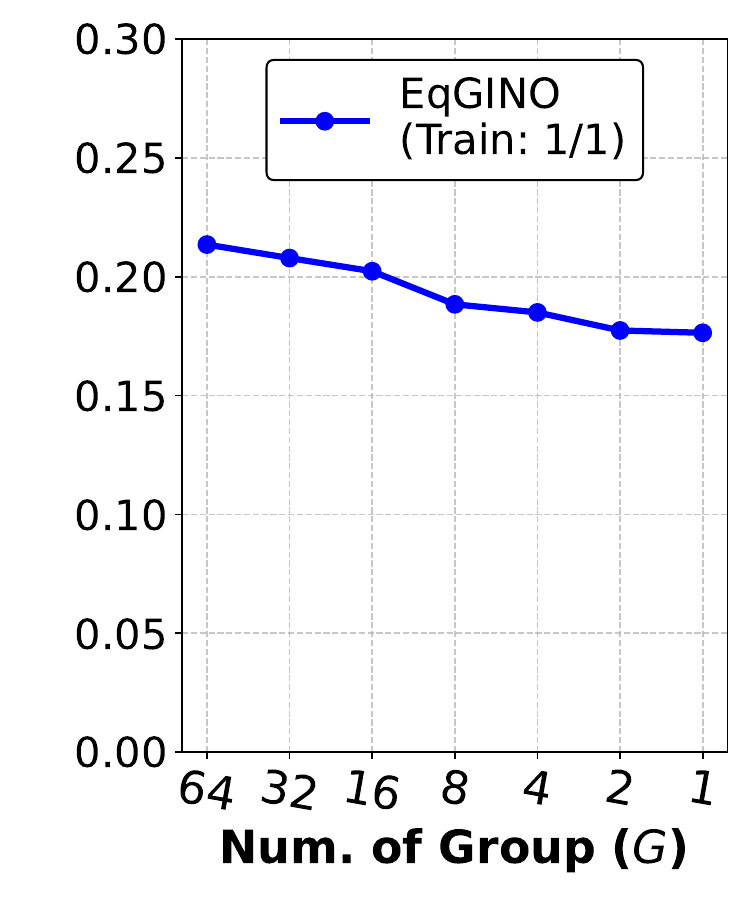}
        \vspace{-1.8em}
        \caption{} 
        \label{fig:ablation_G}
    \end{subfigure}
    
    \caption{Performance evaluation and ablation studies. All models are canonically trained. (a--b) Robustness analysis on \textit{AhmedBody}: (a) impact of varying \textit{training} mesh sampling rates, and (b) generalization to unseen \textit{test} sampling rates with a model trained on sparse (1/20) data. (c--d) Ablation studies on \textit{ShapeNetCar} investigating the effect of (c) spectral resolution per axis, $K$, and (d) number of channel groups, $G$.}
\end{figure}

In this section, we provide a comprehensive evaluation of the discretization-convergence properties and ablation studies for EqGINO. All models presented in these experiments were trained exclusively on the canonical trainset.

\paragraph{Discretization-Convergence and Robustness.}
EqGINO is designed to effectively embed equivariance while inheriting the discretization-convergence properties inherent to the GINO architecture. This design allows the model to learn resolution-independent operators rather than overfitting to specific mesh discretizations.

As illustrated in Fig.~\ref{fig:res_inv_comparison}, we evaluated models trained on datasets with varying mesh sampling rates ($1/5, 1/10, 1/20$) and tested them on the original high-resolution data. While baselines like Transolver degrade in performance as the training data becomes sparser, both GINO and EqGINO demonstrate robust performance across all sampling rates. It is important to note that unlike GINO, EqGINO does not utilize directional information (e.g., coordinate values) to ensure rotation equivariance, which could theoretically limit its expressivity. However, our results show that EqGINO achieves performance comparable to GINO. This suggests that our architectural design—specifically employing a \textit{block-diagonal structure} to enable higher spectral resolution—effectively compensates for the lack of directional cues, maintaining high expressivity even with sparse training data.

Furthermore, Fig.~\ref{fig:ablation_sampling} validates the zero-shot super-resolution capability of our model. An EqGINO model trained only on a coarse $1/20$ subsampled dataset successfully generalizes to unseen test sampling rates, maintaining consistent error rates even when evaluated on the full $1/1$ resolution.

\paragraph{Ablation Studies on Hyperparameters.}
We further investigate the impact of spectral resolution ($K$) and the number of channel groups ($G$) on model performance using the \textit{ShapeNetCar} dataset.

Fig.~\ref{fig:ablation_s} indicates that increasing the spectral resolution per axis ($K$) leads to a monotonic decrease in relative $L_2$ error. Since EqGINO employs strict weight-sharing to satisfy equivariance, the number of unique learnable parameters is drastically reduced. However, as discussed in \textit{Remark 2}, a higher $K$ yields a larger number of distinct radial orbits. This effectively increases the number of independent learnable parameters, thereby recovering the model's expressive capacity within a compact parameter budget.

Fig.~\ref{fig:ablation_G} analyzes the effect of the number of channel groups ($G$). We observe that performance improves as $G$ decreases. However, a smaller $G$ increases computational cost. To mitigate the overhead of Full-FFT, we enforce a block-diagonal structure on the spectral weight tensor. This constraint decreases the FLOPs of channel mixing by a factor of $G$, effectively offsetting the extra computation required for the full set of Fourier modes. We observe that performance remains relatively robust within the range of $G=4$ to $8$. Therefore, we select a value from this interval to balance the trade-off between computational efficiency (via reduced channel interaction) and predictive accuracy.

\clearpage

\section{Qualitative Analysis of Vector Field Equivariance} \label{apx:qual_vector}
\begin{figure*}[h]
\begin{center}
\includegraphics[width=0.70\textwidth]{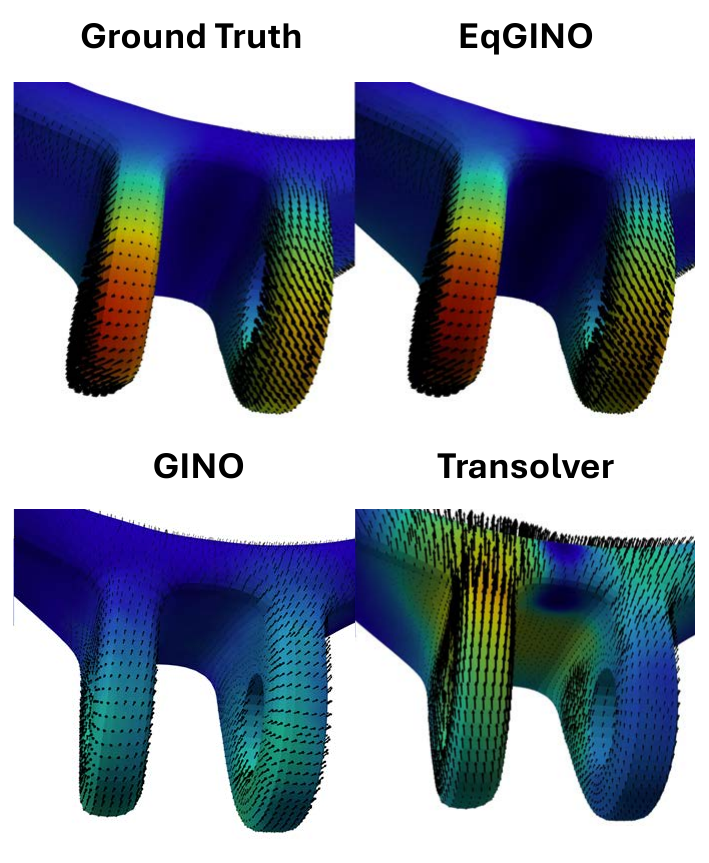}
\end{center}
\caption{Visualization of \textit{deflection} vectors on the \textit{DeepJEB} dataset under torsional loading. Note that all models were trained exclusively on the canonical dataset but evaluated on input geometries rotated by $180^\circ$. The black arrows depict the deflection vectors at each target node, while the contours denote the magnitude of the deflection.}
\label{fig:equi_arrow}
\end{figure*}
In Figure~\ref{fig:equi_arrow}, we qualitatively analyze the vector field predictions. While EqGINO consistently predicts accurate deflection vectors regardless of the input geometry's orientation, state-of-the-art coordinate-dependent models such as GINO and Transolver exhibit significant failures. When these baselines encounter input geometries in unseen positions (i.e., rotated states not present in the training set), they generate completely erroneous prediction patterns that deviate from the ground truth physics. This suggests that the baselines overfit to the absolute spatial coordinates of the training data. Consequently, our results demonstrate that EqGINO is capable of robustly predicting vector fields in an equivariant manner, strictly adhering to the geometric transformation of the input.

\section{Additional Visualizations} \label{apx:visual}
\subsection{DeepJEB} \label{apx:add_exp_deepjeb}
\begin{figure*}[h]
\begin{center}
\includegraphics[width=0.98\textwidth]{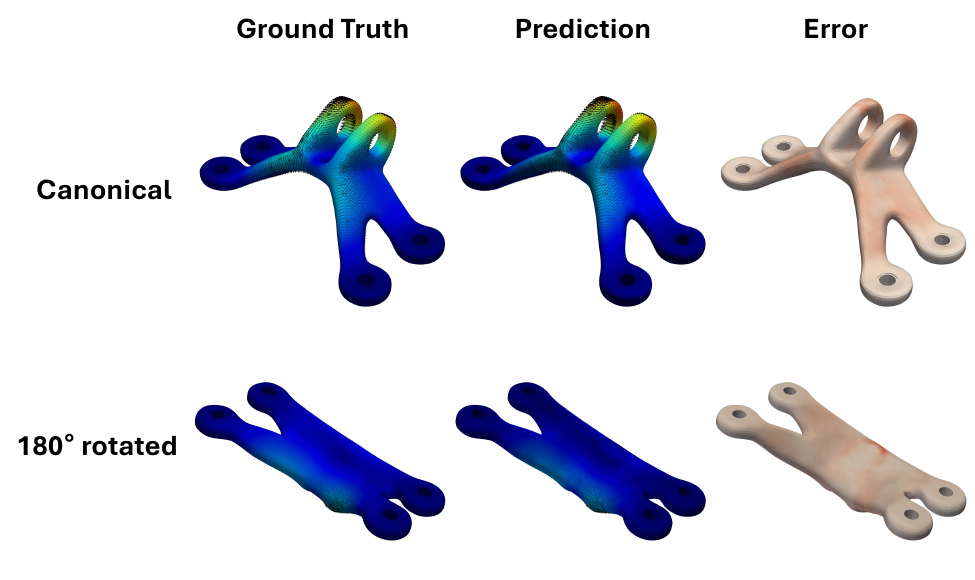}
\end{center}
\vspace{-1em}
\caption{Visualization of \textit{deflection} predictions on the \textit{DeepJEB} dataset using \textbf{EqGINO}. The model was trained canonically. The rows display the input geometries in different orientations: \textbf{(Top)} Canonical input; \textbf{(Bottom)} Input rotated by $180^\circ$.}
\vspace{-0.8em}
\end{figure*}

\begin{figure*}[h]
\begin{center}
\includegraphics[width=0.98\textwidth]{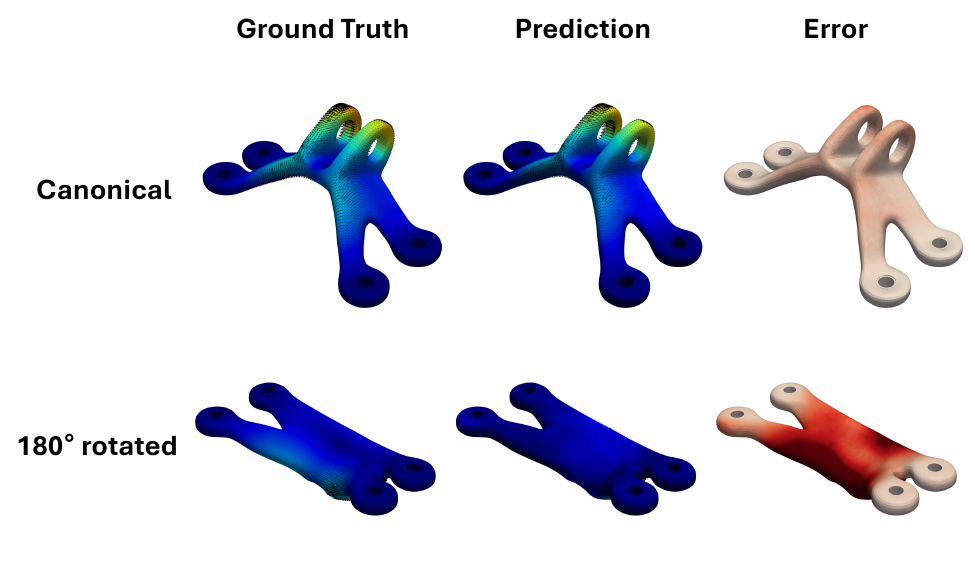}
\end{center}
\vspace{-1em}
\caption{Visualization of \textit{deflection} predictions on the \textit{DeepJEB} dataset using \textbf{GINO}. The model was trained canonically. The rows display the input geometries in different orientations: \textbf{(Top)} Canonical input; \textbf{(Bottom)} Input rotated by $180^\circ$.}% 
\vspace{-0.8em}
\end{figure*}

\begin{figure*}[h]
\begin{center}
\includegraphics[width=0.98\textwidth]{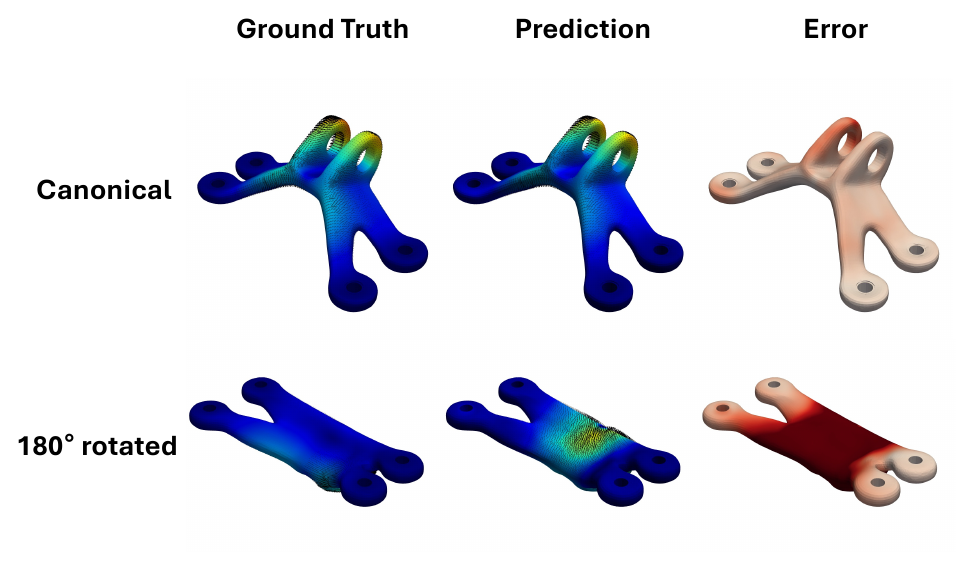}
\end{center}
\vspace{-1em}
\caption{Visualization of \textit{deflection} predictions on the \textit{DeepJEB} dataset using \textbf{Transolver}. The model was trained canonically. The rows display the input geometries in different orientations: \textbf{(Top)} Canonical input; \textbf{(Bottom)} Input rotated by $180^\circ$.}% 
\vspace{-0.8em}
\end{figure*}

\begin{figure*}[h]
\begin{center}
\includegraphics[width=0.98\textwidth]{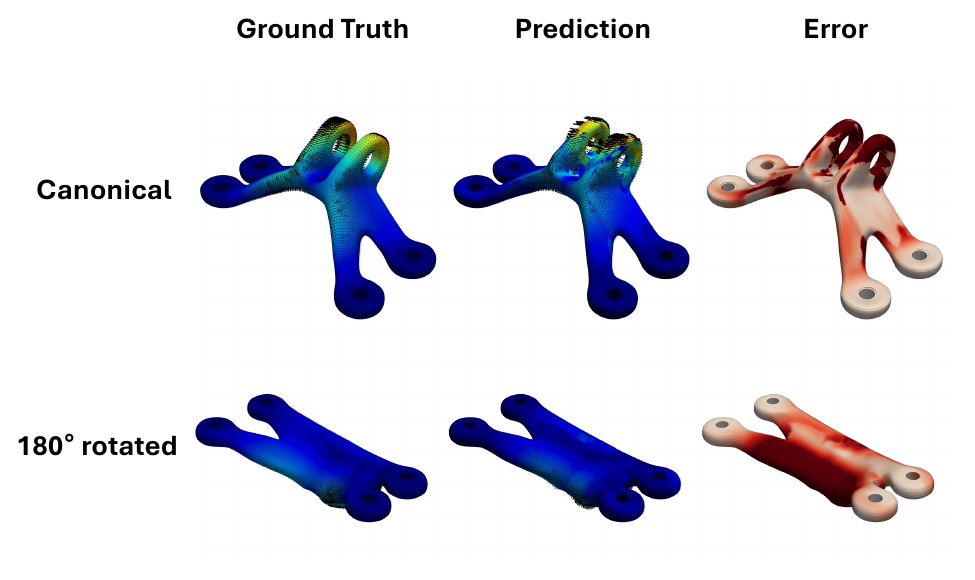}
\end{center}
\vspace{-1em}
\caption{Visualization of \textit{deflection} predictions on the \textit{DeepJEB} dataset using \textbf{Transolver*}, an SE(3)-equivariant variant of the standard Transolver. The model was trained canonically. The rows display the input geometries in different orientations: \textbf{(Top)} Canonical input; \textbf{(Bottom)} Input rotated by $180^\circ$.}
\vspace{-0.8em}
\end{figure*}

\begin{figure*}[h]
\begin{center}
\includegraphics[width=0.98\textwidth]{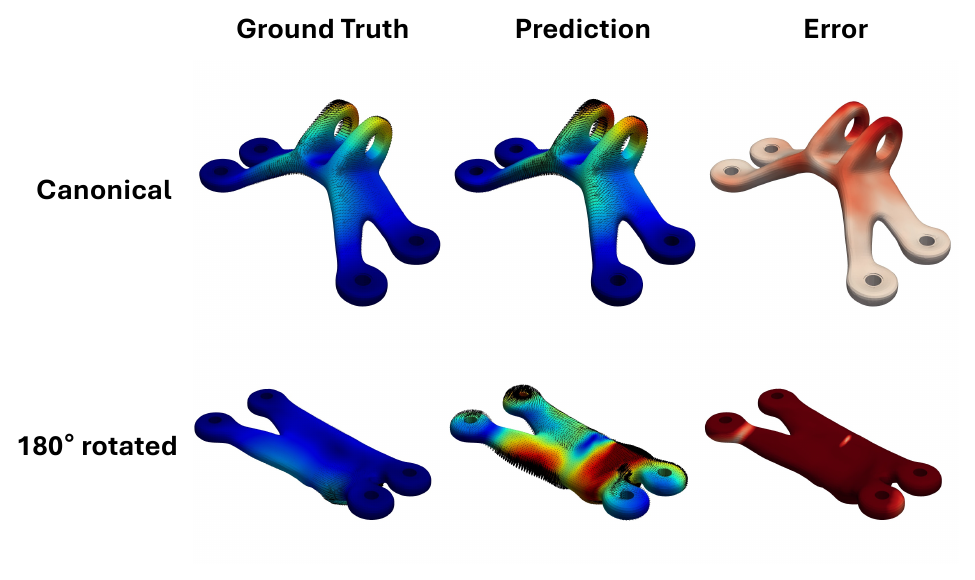}
\end{center}
\vspace{-1em}
\caption{Visualization of \textit{deflection} predictions on the \textit{DeepJEB} dataset using \textbf{PointNet}. The model was trained canonically. The rows display the input geometries in different orientations: \textbf{(Top)} Canonical input; \textbf{(Bottom)} Input rotated by $180^\circ$.}%  
\vspace{-0.8em}
\end{figure*}

\clearpage
\subsection{ShapeNetCar} \label{apx:add_exp_shapenetcar}

\begin{figure*}[h]
\begin{center}
\includegraphics[width=0.9\textwidth]{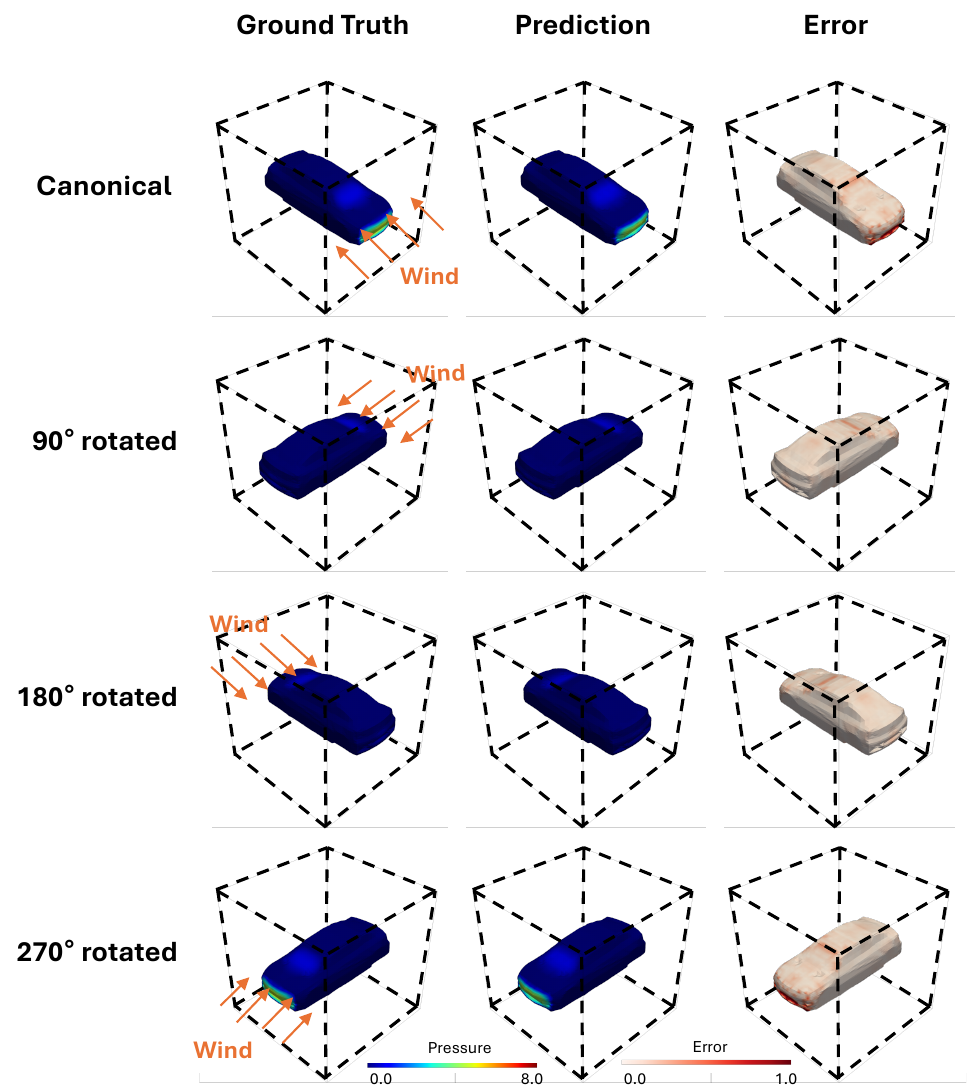}
\end{center}
\vspace{-1em}
\caption{Visualization of \textit{pressure} predictions on the \textit{ShapeNetCar} dataset using \textbf{EqGINO}. The model was trained canonically. The rows display input geometries in different orientations: \textbf{(Top)} Canonical input; \textbf{(Rows 2--4)} Inputs rotated by $90^\circ$, $180^\circ$, and $270^\circ$, respectively.}
\vspace{-0.8em}
\end{figure*}

\begin{figure*}[h]
\begin{center}
\includegraphics[width=0.98\textwidth]{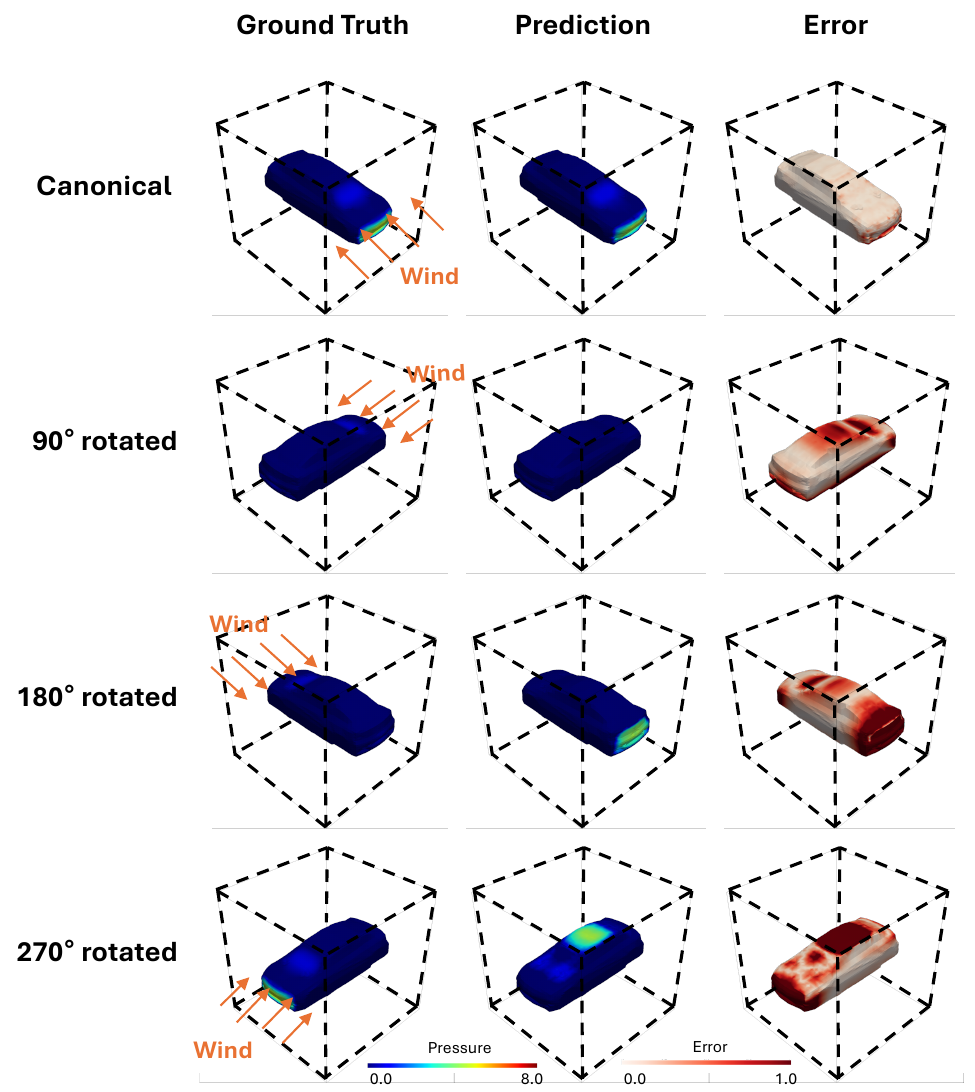}
\end{center}
\vspace{-1em}
\caption{Visualization of \textit{pressure} predictions on the \textit{ShapeNetCar} dataset using \textbf{GINO}. The model was trained canonically. The rows display input geometries in different orientations: \textbf{(Top)} Canonical input; \textbf{(Rows 2--4)} Inputs rotated by $90^\circ$, $180^\circ$, and $270^\circ$, respectively.}% 
\vspace{-0.8em}
\end{figure*}

\begin{figure*}[h]
\begin{center}
\includegraphics[width=0.98\textwidth]{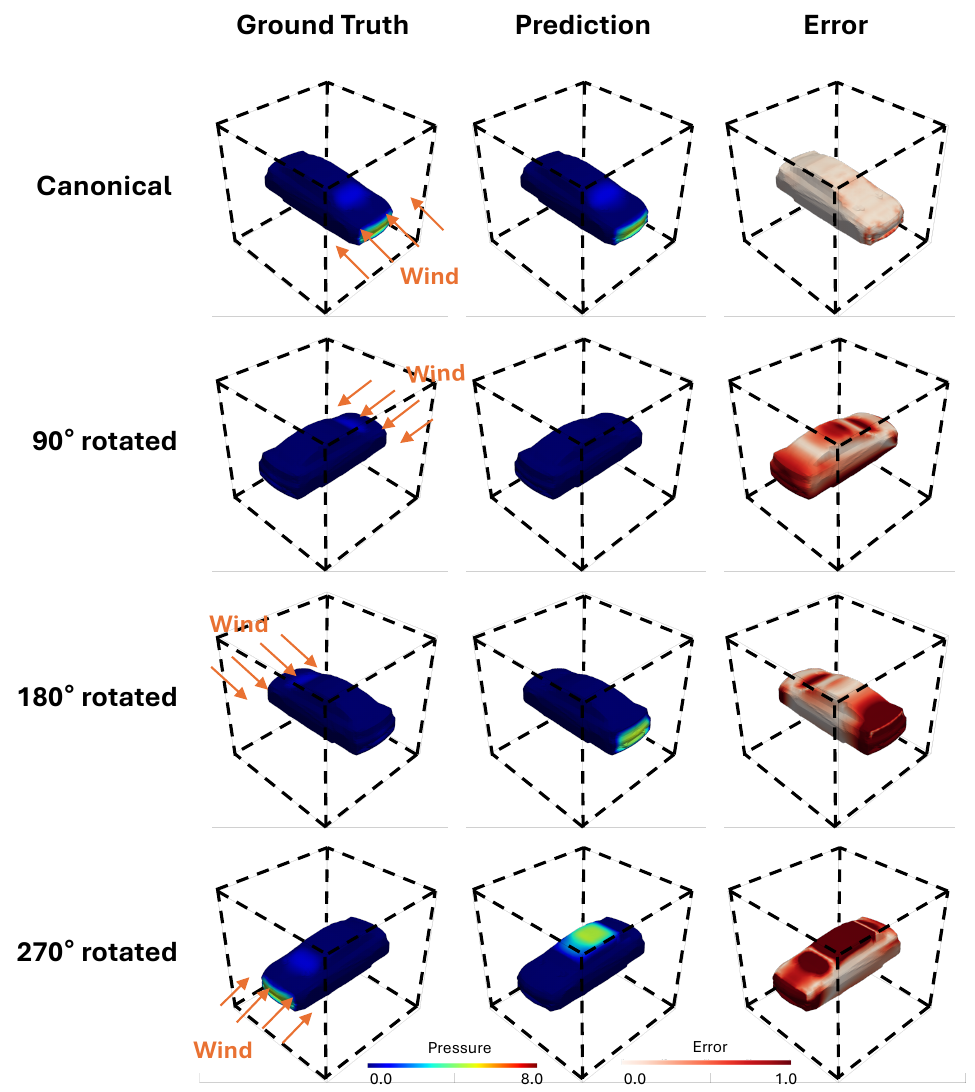}
\end{center}
\vspace{-1em}
\caption{Visualization of \textit{pressure} predictions on the \textit{ShapeNetCar} dataset using \textbf{Transolver}. The model was trained canonically. The rows display input geometries in different orientations: \textbf{(Top)} Canonical input; \textbf{(Rows 2--4)} Inputs rotated by $90^\circ$, $180^\circ$, and $270^\circ$, respectively.}%  
\vspace{-0.8em}
\end{figure*}

\begin{figure*}[h]
\begin{center}
\includegraphics[width=0.98\textwidth]{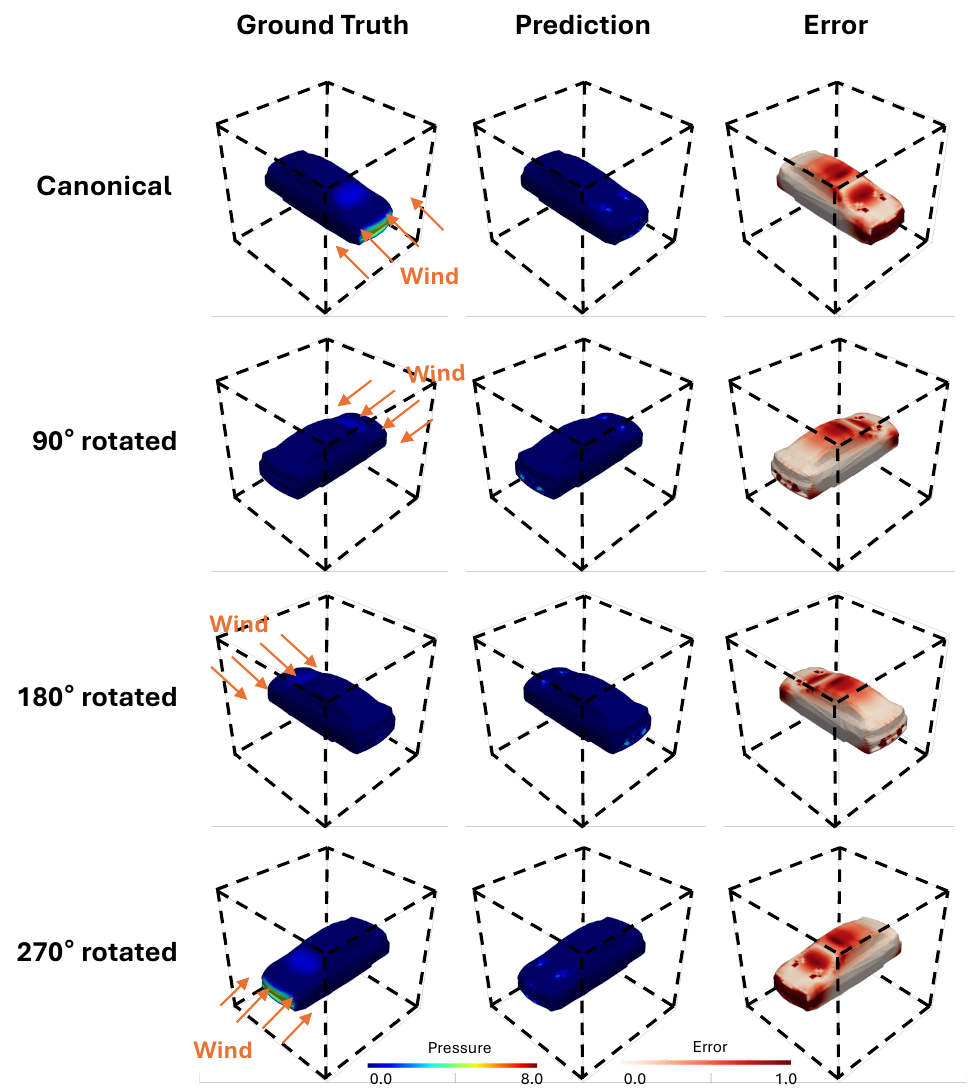}
\end{center}
\vspace{-1em}
\caption{Visualization of \textit{pressure} predictions on the \textit{ShapeNetCar} dataset using \textbf{Transolver*}, an SE(3)-equivariant variant of the standard Transolver. The model was trained canonically. The rows display input geometries in different orientations: \textbf{(Top)} Canonical input; \textbf{(Rows 2--4)} Inputs rotated by $90^\circ$, $180^\circ$, and $270^\circ$, respectively.}%  
\vspace{-0.8em}
\end{figure*}

\clearpage
\subsection{AhmedBody} \label{apx:add_exp_ahmed}
\begin{figure*}[h]
\begin{center}
\includegraphics[width=0.8\textwidth]{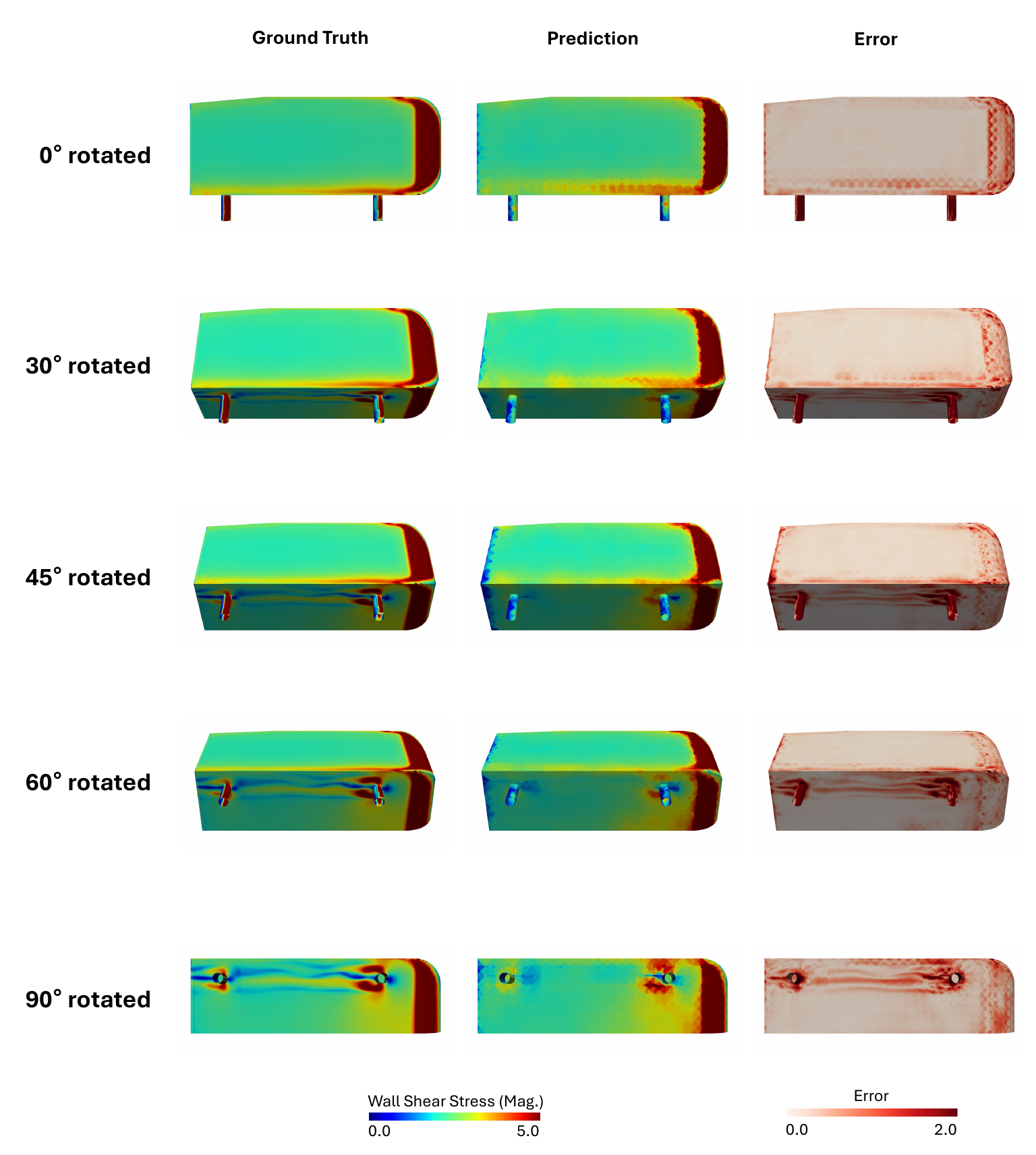}
\end{center}
\vspace{-1em}
\caption{Visualization of \textit{wall shear stress} predictions on the \textit{AhmedBody} dataset using \textbf{EqGINO}. The model was trained on data with random continuous rotations (i.e., the rotated-to-rotated setting). The rows display input geometries in different orientations: \textbf{(Rows 1--5)} Inputs rotated by $0^\circ$, $30^\circ$, $45^\circ$, $60^\circ$, and $90^\circ$, respectively.} 
\vspace{-0.8em}
\end{figure*}

\begin{figure*}[h]
\begin{center}
\includegraphics[width=0.8\textwidth]{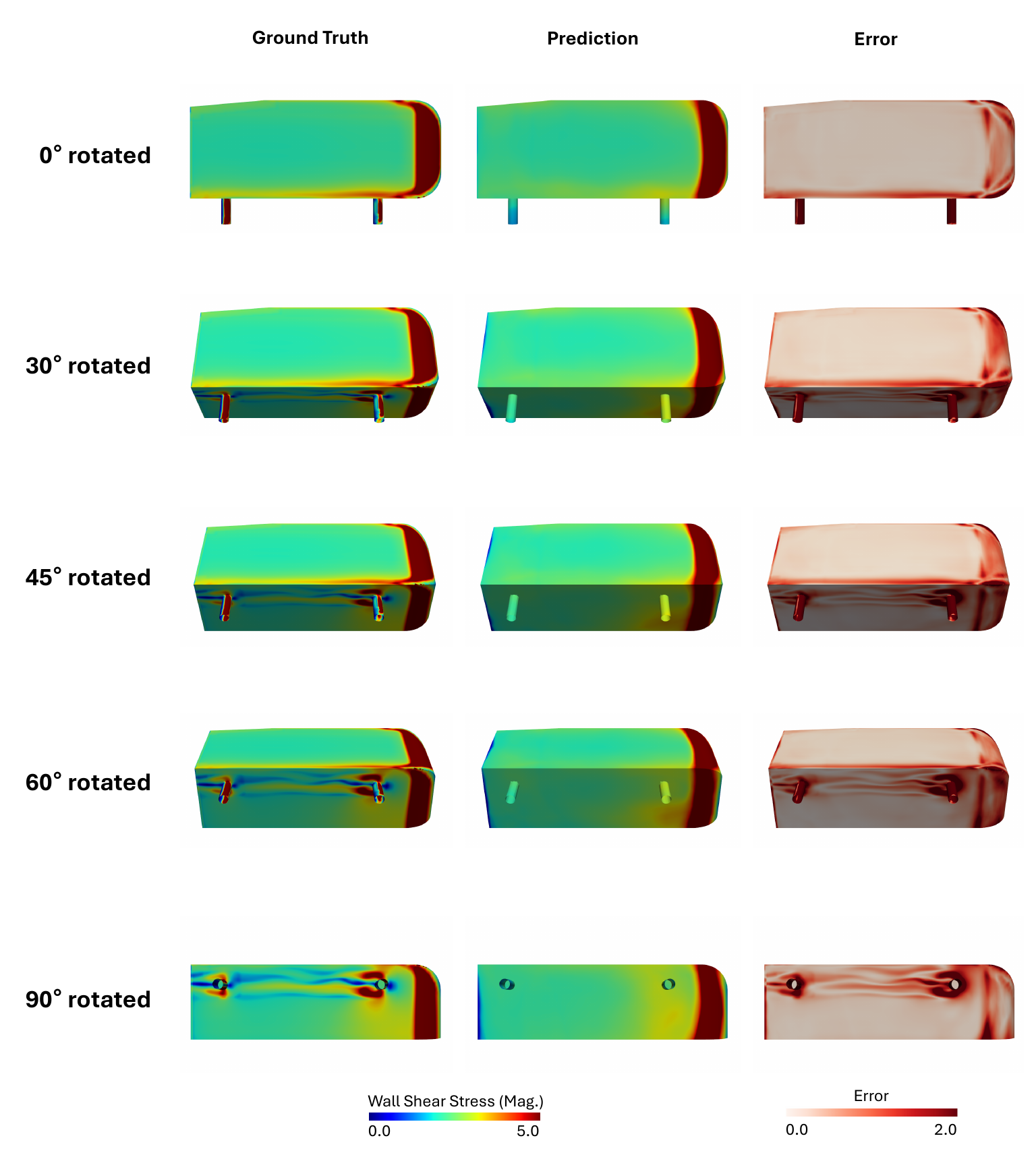}
\end{center}
\vspace{-1em}
\caption{Visualization of \textit{wall shear stress} predictions on the \textit{AhmedBody} dataset using \textbf{GINO}. The model was trained on data with random continuous rotations (i.e., the rotated-to-rotated setting). The rows display input geometries in different orientations: \textbf{(Rows 1--5)} Inputs rotated by $0^\circ$, $30^\circ$, $45^\circ$, $60^\circ$, and $90^\circ$, respectively.} % 
\vspace{-0.8em}
\end{figure*}

\begin{figure*}[h]
\begin{center}
\includegraphics[width=0.8\textwidth]{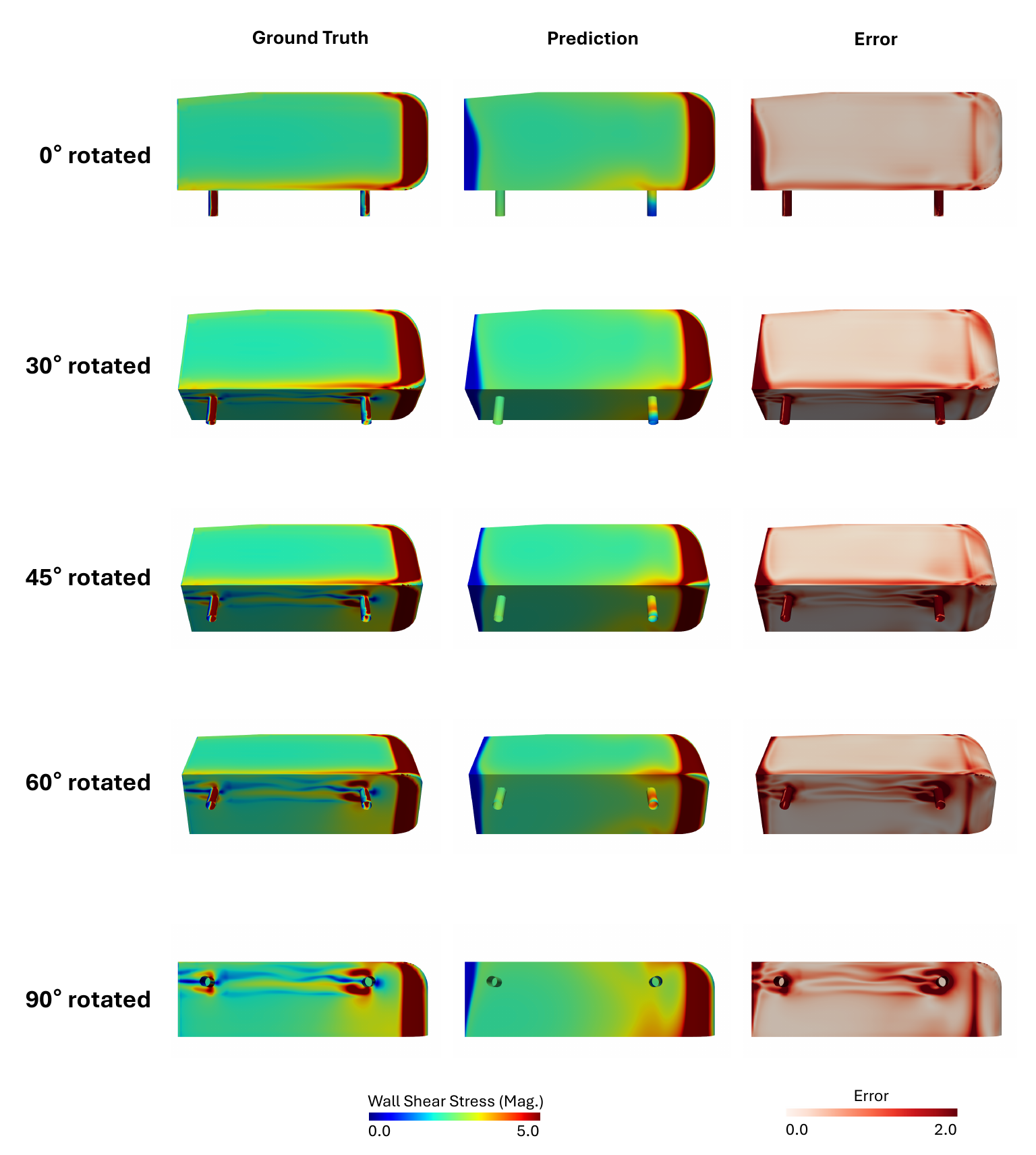}
\end{center}
\vspace{-1em}
\caption{Visualization of \textit{wall shear stress} predictions on the \textit{AhmedBody} dataset using \textbf{Transolver}. The model was trained on data with random continuous rotations (i.e., the rotated-to-rotated setting). The rows display input geometries in different orientations: \textbf{(Rows 1--5)} Inputs rotated by $0^\circ$, $30^\circ$, $45^\circ$, $60^\circ$, and $90^\circ$, respectively.} % 
\vspace{-0.8em}
\end{figure*}

\begin{figure*}[h]
\begin{center}
\includegraphics[width=0.8\textwidth]{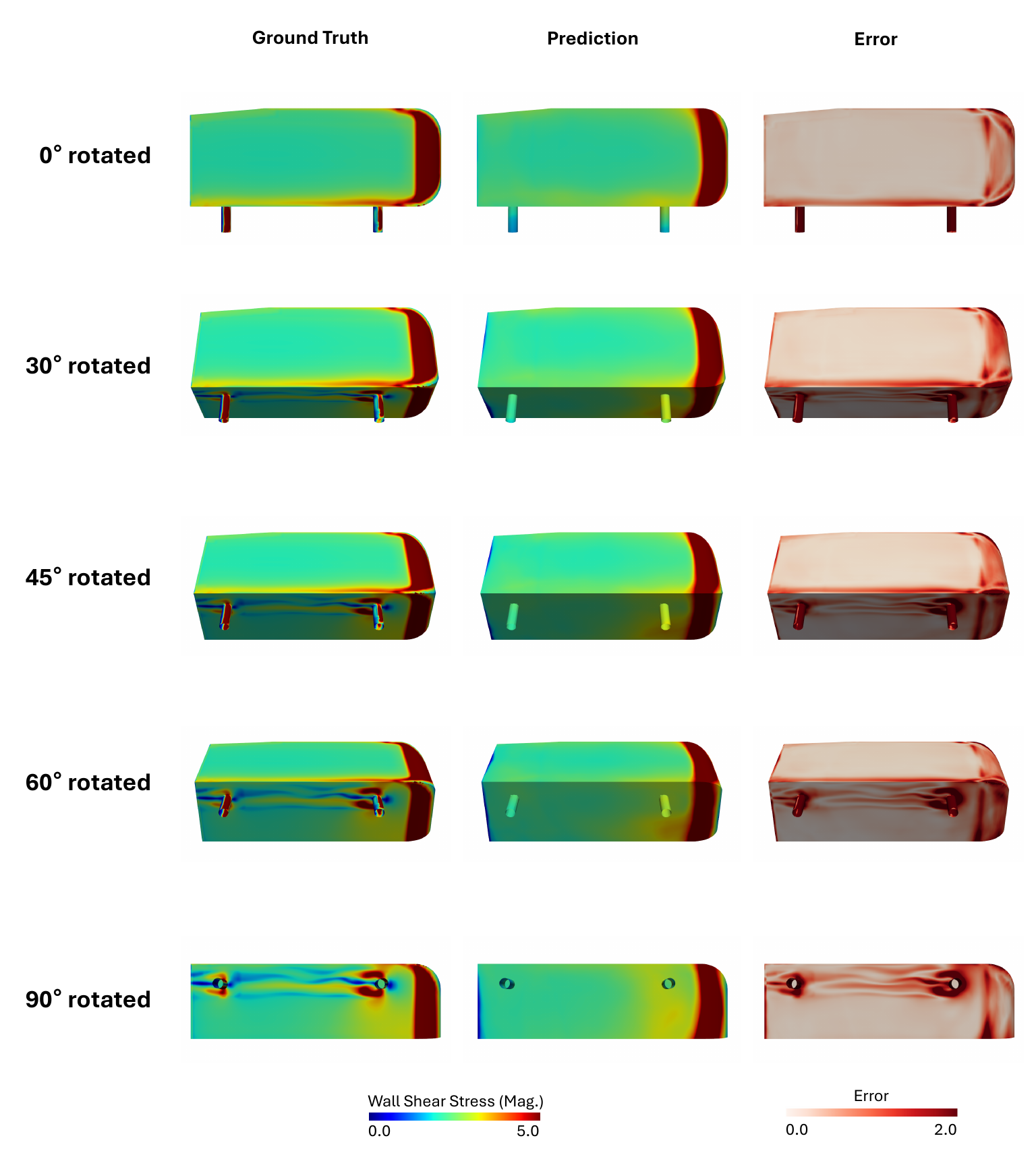}
\end{center}
\vspace{-1em}
\caption{Visualization of \textit{wall shear stress} predictions on the \textit{AhmedBody} dataset using \textbf{PointNet}. The model was trained on data with random continuous rotations (i.e., the rotated-to-rotated setting). The rows display input geometries in different orientations: \textbf{(Rows 1--5)} Inputs rotated by $0^\circ$, $30^\circ$, $45^\circ$, $60^\circ$, and $90^\circ$, respectively.} % 
\vspace{-0.8em}
\end{figure*}

\end{document}